\documentclass[10pt, conference]{IEEEtran}
\IEEEoverridecommandlockouts
\usepackage{amsmath}
\usepackage{amssymb}
\usepackage{url}
\usepackage{cite}
\usepackage{flushend}
\usepackage{makecell}
\usepackage{subfigure}
\usepackage{amsfonts,amsthm}

\usepackage{algorithmic}
\usepackage{algorithm}

\usepackage{amsthm}

\theoremstyle{definition}
\newtheorem{definition}{Definition}
\usepackage{graphicx}
\usepackage{textcomp}
\usepackage{float}
\usepackage{booktabs}
\usepackage{bm}
\usepackage{float}  %设置图片浮动位置的宏包
\usepackage{pifont} % 添加在已加载的宏包之后（如 fontenc 下方）
\usepackage{wasysym} 
\newcommand{\EMPTYcircle}{\Circle}
\newcommand{\etal}{\textit{et al.}}

%加跳转
%\usepackage{hyperref}
%\hypersetup{hypertex=true,
%colorlinks=true,
%linkcolor=blue,
% anchorcolor=blue,
% citecolor=blue}
%
\usepackage{multirow}
\usepackage[normalem]{ulem}
\useunder{\uline}{\ul}{}
\usepackage{xcolor}
\def\BibTeX{{\rm B\kern-.05em{\sc i\kern-.025em b}\kern-.08em
    T\kern-.1667em\lower.7ex\hbox{E}\kern-.125emX}}
\allowdisplaybreaks
\newtheorem{lemma}{Lemma}
\newtheorem{theorem}{Theorem}

\usepackage{tablefootnote}

\begin{document}

\title{In-Context Source and Channel Coding}

\author{Ziqiong Wang, Tianqi Ren, Rongpeng Li, Zhifeng Zhao, and Honggang Zhang
    \thanks{Z. Wang, T. Ren and R. Li are with the College of Information Science and Electronic Engineering, Zhejiang University (email: \{wangziqiong, rentianqi, lirongpeng\}@zju.edu.cn).}
    \thanks{Z. Zhao is with Zhejiang Lab as well as the College of Information Science and Electronic Engineering, Zhejiang University (email: zhaozf@zhejianglab.com).}
    \thanks{H. Zhang is with the Macau University of Science and Technology, China (email:  honggang.zhang@ieee.org).}
}

\maketitle

\begin{abstract}
Separate Source–Channel Coding (SSCC) remains attractive for text transmission due to its modularity and compatibility with mature entropy coders and powerful channel codes. However, SSCC often suffers from a pronounced cliff effect in low Signal-to-Noise Ratio (SNR) regimes, where residual bit errors after channel decoding can catastrophically break lossless source decoding, especially for Arithmetic Coding (AC) driven by Large Language Models (LLMs). This paper proposes a receiver-side In-Context Decoding (ICD) framework that enhances SSCC robustness without modifying the transmitter. ICD leverages an Error Correction Code Transformer (ECCT) to obtain bit-wise reliability for the decoded information bits. Based on the context-consistent bitstream, ICD constructs a confidence-ranked candidate pool via reliability-guided bit flipping, samples a compact yet diverse subset of candidates, and applies an LLM-based arithmetic decoder to obtain both reconstructions and sequence-level log-likelihoods. A reliability–likelihood fusion rule then selects the final output. We further provide theoretical guarantees on the stability and convergence of the proposed sampling procedure. Extensive experiments over Additive White Gaussian Noise (AWGN) and Rayleigh fading channels demonstrate consistent gains compared with conventional SSCC baselines and representative Joint Source-Channel Coding (JSCC) schemes.
\end{abstract}

%%% Keywords. ¹Ø¼ü´Ê
\begin{IEEEkeywords}
Separate Source–Channel Coding (SSCC), LLM-based arithmetic coding, Error Correction Code Transformer (ECCT), context-consistent decoding, candidate processing pipeline.
\end{IEEEkeywords}

%%%%%%%%%%%%%%%%%%%%%%%%%%%%%%%%%%%%%%%%%%%%%%%%%%%%%%%
%%% The main text. ÕýÎÄ²¿·Ö
%%%%%%%%%%%%%%%%%%%%%%%%%%%%%%%%%%%%%%%%%%%%%%%%%%%%%%%
\section{Introduction}

In recent years, Semantic Communications (SemCom) have attracted growing attention~\cite{luRethinkingModernCommunication2023,semantics1,semantics2,semantics3}, driven by the need to deliver task-relevant meaning rather than faithfully transporting raw symbols, and further accelerated by advances in deep learning and generative models~\cite{semantics_generative}. 
% Among various design paradigms, deep learning-based Joint Source-Channel Coding (JSCC)~\cite{JSCC1,JSCC2,JSCC3,JSCC4,JSCC5,JSCC6,JSCC7} has become a particularly active line of research, enabling end-to-end semantic-aware transceivers with robust performance under finite blocklength and time-varying channels.
% Nevertheless, Separate Source-Channel Coding (SSCC)~\cite{SSCC1,SSCC2,SSCC3} remains highly attractive for practical deployments. Its modular and standard-compliant layered design allows independent reuse and upgrading of mature source coders~\cite{source_code} and capacity-approaching channel codes~\cite{channel_code}. When operated below channel capacity with lossless or near-lossless source coding, SSCC can in principle achieve near-error-free, bit-exact reconstruction at sufficiently high Signal-to-Noise Ratio (SNR), which is particularly desirable for discrete and error-intolerant sources such as text. Meanwhile, learning-based end-to-end JSCC often entails non-trivial training and maintenance costs, including large-scale data and compute requirements~\cite{JSCC_limit1}. 
While deep learning-based Joint Source-Channel Coding (JSCC) enables end-to-end semantic-aware transceivers with robustness under finite blocklength and time-varying channels~\cite{JSCC1,JSCC2,JSCC3,JSCC4,JSCC5,JSCC6,JSCC7}, Separate Source-Channel Coding (SSCC) remains attractive due to its modularity and standard compliance~\cite{SSCC1,SSCC2,SSCC3}. By decoupling source coding and channel coding, SSCC can directly reuse mature compressors~\cite{source_code} and capacity-approaching channel codes~\cite{channel_code}, and in principle achieves near-error-free, bit-exact reconstruction at sufficiently high Signal-to-Noise Ratio (SNR). % In contrast, learning-based JSCC often entails substantial training and maintenance costs~\cite{JSCC_limit1}, which may hinder deployment in dynamic or resource-constrained scenarios.
% As the terminology implies, SSCC architectures naturally decompose into source coding and channel coding. Classical lossless entropy coders such as Huffman coding~\cite{Huffman_coding} and Arithmetic Coding (AC)~\cite{AC_coding} remain fundamental building blocks, where symbol probabilities are exploited to approach the entropy bound. Recent advances further replace hand crafted models with neural predictors, including Recurrent Neural Networks (RNN)-based compressors DeepZip~\cite{DeepZip}, Long Short Term Memory (LSTM)-based~\cite{LSTM_coding} and hybrid schemes DZip~\cite{DZip}. More prominently, Transformer-based~\cite{semantics_generative} and Large Language Model (LLM)-based compression springs up~\cite{LLM_coding1,LLM_coding2,LLM_coding3,LLM_coding4,LLM_coding5}. These developments are supported by the prediction-compression duality~\cite{LLM_compress_predict}, which implies that stronger probabilistic predictors can be directly translated into stronger lossless compressors. 
Meanwhile, motivated by the prediction-compression duality~\cite{LLM_compress_predict}, modern neural predictors, ranging from recurrent and hybrid compressors~\cite{DeepZip,LSTM_coding,DZip} to LLM-based schemes~\cite{LLM_coding1,LLM_coding2,LLM_coding3,LLM_coding4,LLM_coding5}, can benefit the source encoders such as Huffman coding~\cite{Huffman_coding} and arithmetic coding~\cite{AC_coding}, yielding improved transmission efficiency.
However, though Large Language Model (LLM)-based SSCC could bring superior performance than JSCC \cite{SSCC1}, it suffers from the severe ``cliff effect'', whereby the performance degrades sharply in low-SNR regimes. This degradation primarily arises because even a small number of residual bit errors after channel decoding may catastrophically break subsequent source decoding. % Despite improved compression efficiency, their lossless decoding remains highly error-intolerant, making SSCC vulnerable under low SNR and fading.

% Correspondingly, on the channel-coding side, SSCC commonly relies on structured error correcting codes, including convolutional codes~\cite{convolution_code} and linear block codes such as Bose–Chaudhuri–Hocquenghem (BCH) codes~\cite{BCH}, Low-Density Parity-Check (LDPC) codes~\cite{LDPC}, and polar codes~\cite{Polar_code}. Practical decoding algorithms can be broadly categorized into hard-decision decoding and soft-decision decoding. The former~\cite{hard_decision} operates on quantized symbol or bit decisions and ignores reliability information, which typically incurs a noticeable performance loss in low-SNR or fading channels~\cite{hard_decision_limit}. In contrast, soft decoding leverages reliability information, for example Log-Likelihood Ratios (LLRs), and often applies iterative message passing algorithms such as Belief Propagation (BP)~\cite{BP_convention} and its low-complexity variants including the Min-Sum (MS) algorithm~\cite{MS_convention}. However, in finite-blocklength regimes and under stringent latency and complexity constraints, further closing the gap to optimal decoding remains challenging.
There is no doubt that to improve robustness in low-SNR regimes, SSCC can resort to enhancing the channel-coding side~\cite{convolution_code,BCH,LDPC,Polar_code,hard_decision,BP_convention,MS_convention}
%, by adopting convolutional~\cite{convolution_code} and linear block codes such as Bose–Chaudhuri–Hocquenghem (BCH) codes~\cite{BCH}, Low-Density Parity-Check (LDPC) codes~\cite{LDPC}, and polar codes~\cite{Polar_code}, decoded via hard-~\cite{hard_decision} or soft-decision algorithms such as Belief Propagation (BP)~\cite{BP_convention} and Min-Sum (MS) decoding~\cite{MS_convention}. 
However, under finite blocklength and stringent latency constraints, closing the gap to optimal decoding remains challenging. Learning-aided decoders have been explored, including model-based designs~\cite{model_based1,model_based2,model_based3,model_based4,model_based5,model_based6,model_based7,model_based8} %that parameterize message passing 
and model-free approaches~\cite{model_free1,model_free2,model_free3,model_free4}. Notably, the Error Correction Code Transformer (ECCT)~\cite{ECCT} has incorporated code-aware inductive biases via parity-check-guided masked self-attention and has achieved error-rate improvements in low-SNR conditions. Nevertheless, such decoders remain constrained by channel observations alone~\cite{ECCT_limit}, without introducing explicit external information. % Related attempts either increase latency through hybrid pipelines~\cite{hybrid_paradigm} or require transmitter-side modifications through friendly attacks~\cite{friendly_attack}, which may limit applicability in legacy or fast-varying settings.

On the other hand, leveraging contextual information~\cite{context1, context2, context3} at the receiver offers a backward-compatible means to enhance robustness without violating existing SSCC architectures. However, directly applying in-context decoding to corrupted bitstreams is often insufficient, as early decoding errors can dominate the inference process and lead to unstable or suboptimal reconstructions. To address this limitation, a natural approach is to maintain multiple candidates during decoding~\cite{candidate_decoding}, perform backtracking~\cite{backtracking}, or introduce diversity through stochastic sampling~\cite{sampling}. However, exhaustively exploring the candidate space is computationally prohibitive when expensive LLM-based decoding \cite{SSCC1} is involved, while unconstrained sampling may generate redundant or low-quality candidates. In principle, both in-context information and channel decoding reliability can contribute to maintaining a compact yet diverse set of plausible candidates for improved decoding performance.

In this paper, we propose in-context source and channel coding for SSCC-based transmission, aiming to mitigate the cliff effect. Particularly, we calibrate a receiver-side In-Context Decoding (ICD) framework that augments a standard SSCC pipeline by coupling ECCT-assisted channel decoding with LLM-driven arithmetic source decoding in a lightweight manner. Here, ECCT provides a bit-wise reliability vector that identifies probabilistically corrupted positions, while in-context information, delivered through previous transmissions and/or other reliable channels, offers a strong prior to reduce ambiguity in the recovered message. Based on the channel-decoded bitstream, its reliability vector, and the contextual bits, we leverage In-Context Candidate Generator (CCG) to construct a confidence-ranked candidate pool through reliability-guided bit flipping. Afterward, to balance the computational budget and decoding accuracy, we adopt In-Context Candidate Sampler (CCS), which selects a sufficiently compact yet highly plausible subset of candidates and their sequence-level log-likelihoods for LLM-based source decoding. Finally, we adopt In-Context Likelihood Ranking (CLR) to select the most plausible reconstruction by jointly accounting for ECCT-derived reliability and linguistic plausibility. Besides the key distinctions with representative prior studies in Table~\ref{tab:comparison}, we summarize our contributions as follows.
\begin{itemize}
    \item We introduce a practical in-context source and channel coding mechanism that incorporates contextual information into the SSCC receiver and explicitly maintains context consistency via an overwrite-based step, thereby constraining the feasible message space and improving robustness without modifying the transmitter.
    \item We develop a three-stage candidate processing pipeline through leveraging the ECCT-provided bit-wise reliability. Specifically, we employ CCG to construct a confidence-ranked set of candidate bitstreams via reliability-guided bit flipping. To achieve a favorable accuracy–complexity trade-off, we further apply CCS to select a compact yet diverse subset. Finally, we use CLR to integrate the ECCT-derived reliability with the LLM decoding log-likelihood to determine the final reconstruction. Moreover, we provide theoretical guarantees on the stability and convergence of the proposed CCS module.
    \item We conduct extensive evaluations and demonstrate the superiority of ICD over conventional SSCC baselines, represented by Huffman-SSCC and ECCT-aided scheme~\cite{SSCC1}, as well as over representative JSCC schemes, including DeepSC~\cite{DeepSC}, Universal Transformer (UT)~\cite{UT}, and UT with quantization~\cite{UT_quanti}.
\end{itemize}

The remainder of the paper is organized as follows. Sec.~\ref{sec2:Prelimiaries} succinctly reviews the key components of our SSCC system. Sec.~\ref{sec3:system_model} briefly introduces the system model and formulates the problem. Sec.~\ref{sec:algorithm} presents the overview of our proposed ICD framework. In Sec.~\ref{sec4_Experiment}, we elaborate on the experimental results and discussions. Finally, Sec.~\ref{sec5_Conclusions} concludes the paper. For convenience, we list the major notations of this paper in Table~\ref{tab1:natation}.

\begin{table*}[!t]
\centering
\caption{Summary and comparison of related papers.}
\label{tab:comparison}
\begin{tabular*}{\textwidth}{m{1.5cm}>{\centering\arraybackslash}m{2cm}>{\centering\arraybackslash}m{2cm}>{\centering\arraybackslash}m{3cm}m{7cm}}
\toprule
  References & LLM Source Decoding & In-Context Information & Reliability-Enhanced Channel Decoding & Brief Description  \\\hline
  Zhou F, \etal~\cite{KB} &  $\EMPTYcircle$ & $\RIGHTcircle$ & $\EMPTYcircle$
    & Using a shared knowledge base for semantic abstraction and recovery.\\
  Choukroun Y, \etal~\cite{ECCT}  & $\EMPTYcircle$ & $\EMPTYcircle$ & $\RIGHTcircle$ & Improving channel decoding with a transformer-based ECCT decoder. \\
  Kurmukova A, \etal~\cite{friendly_attack}   & $\EMPTYcircle$ & $\EMPTYcircle$ & $\CIRCLE$ & Boosting decoder reliability via transmitter-side friendly perturbations. \\
  Ren T, \etal~\cite{SSCC1}& $\CIRCLE$ & $\EMPTYcircle$ & $\RIGHTcircle$ & Combining ECCT-aided channel decoding with LLM-based lossless source decoding. \\ \hline
  This paper     & $\CIRCLE$      & $\CIRCLE$  & $\CIRCLE$      & Performing in-context, reliability-guided candidate decoding with LLM-based source decoding.\\
\bottomrule
\multicolumn{5}{>{\vspace{-1mm}\footnotesize\itshape}r}{Notations: \rm{${\CIRCLE}$} \emph{indicates fully included;} \rm{${\RIGHTcircle}$} \emph{means partially included;} \rm{${\EMPTYcircle}$} \emph{denotes not included.}}

\end{tabular*}
\end{table*}

\begin{table*}[!t]
\centering
\caption{Major notations used in this paper.}
\label{tab1:natation}
\begin{tabular*}{.85\textwidth}{cl}
\toprule
  Notation & Definition \\\hline
  % 发端序列符号
  $\textbf{s}_{1:N_s}$, $N_s$ & Input text sequence and its length.\\
  $\textbf{t}_{1:N_t}$, $N_t$ & Token sequence produced by the LLM tokenizer and its length.\\
  $\textbf{m}, \hat{\textbf{m}}$ & Source-coded message and its channel-decoded version. \\
  $\textbf{x}_b$, $\textbf{x}_s$ & Channel-coded binary codeword and its BPSK-modulated symbol sequence. \\
  % 信源编码
  $N, K$ & Codeword length and message length of channel code.\\
  $\mathcal{D}=\{D_1,\ldots,D_\tau\}$ & Token vocabulary defined by the tokenizer and $\tau$ is the vocabulary size. \\
  % $p_s(\cdot)$ & True source distribution of the text sequence.  \\
  $\tilde{p}(\cdot)$ & LLM-induced conditional probability used as the distribution for AC. \\ 
  % 信道编码
  $\mathbf{G}, \mathbf{H}$ & Generator matrix and parity check matrix. \\
  % 信道
  $h$ & Channel fading coefficient. \\
  $\textbf{z}, \hat{\textbf{z}}$ & Additive Gaussian noise and ECCT output estimating the multiplicative disturbance. \\
  $\sigma_n^2$, $\mathbf{I}$ & Noise variance and identity matrix in the Gaussian noise model. \\
  % 收端符号
  $\textbf{y}$ & Received real-valued channel output vector. \\
  $\tilde{\textbf{y}}$ & ECCT input feature vector (i.e., concatenation of magnitude and syndrome features of $\textbf{y}$). \\
  $f_k$ & Bit-flip indicator when constructing the candidate $\tilde{\textbf{m}}$.\\
  $\boldsymbol{\rho}$ & Normalized bit-wise reliability vector obtained by applying sigmoid to $\hat{\mathbf{z}}$. \\
  $\boldsymbol{\rho}_{\mathrm{m}}$ & Bit-wise reliability vector for the information bits.\\
  $\tilde{\boldsymbol{\rho}}_{\mathrm{m}}$ & Candidate-specific reliability associated with $\tilde{\textbf{m}}$.\\

  $\ell$ & Sequence-level log-likelihood of the LLM-decoded candidate.\\
  $\hat{\textbf{x}}_s, \hat{\textbf{x}}_b$ & ECCT-processed soft symbol estimate and hard-decision demodulated estimate. \\
  $\textbf{m}_{\mathrm{pre}}$ & Contextual information bit sequence. \\
  $\tilde{\textbf{m}}$ & Candidate bitstream generated by applying a specific bit-flip pattern to $\hat{\textbf {m}}$.  \\
  $\tilde{\textbf{s}}$ & Reconstructed text sequence by the LLM-based source decoder from candidate bitstream $\tilde{\textbf{m}}$.\\
  $\tilde{\textbf{s}}^\ast$ & Final reconstructed text sequence. \\
  $\mathcal{M}_{L_c}$, $L_c$ & Candidate set retained by CCG and its size. \\
  $\mathcal{M}_{L_s}$, $L_s$ & Subset sampled by CCS and its size. \\
  % MH采样
  $\Omega$ & State space consisting of all subsets of $\mathcal{M}_{L_c}$ with fixed cardinality $L_s$.\\
  $\mathcal{S}^{(t)}$ & State of the Markov chain at iteration $t$, represented as a subset of $\mathcal{M}_{L_c}$.\\
  $E(\mathcal{\cdot})$ & Energy function evaluating the quality of subset in terms of reliability and diversity.\\
  $\pi(\mathcal{\cdot})$ & Target probability distribution over the state space $\Omega$.\\
\bottomrule
\end{tabular*}
\end{table*}

% \newpage

\section{Preliminaries}\label{sec2:Prelimiaries}

% In this section, we introduce LLM-based source coding and ECCT-complemented channel decoding.

\subsection{LLM-based Source Coding}
\label{sec2:LLM_coding}
We consider LLM-based Arithmetic Coding (AC) \cite{AC_coding} for source coding \cite{SSCC1}. For a tokenizer vocabulary $\mathcal{D}=\{D_1,\ldots,D_\tau\}$, a pre-trained LLM can map a source sequence $\textbf{s}$ to a token sequence $\textbf{t}_{1:N_t}$ with $t_k\in\mathcal{D}$, while providing token-level conditional probabilities $\tilde{p}(t_k \mid \textbf{t}_{1:k-1})$ for generation. Using these conditionals, AC maps $\textbf{t}_{1:N_t}$ to a binary message $\textbf{m}\in\{0,1\}^{K}$. By progressively updating a probability interval by the autoregressive structure of the LLM, the source can eventually be encoded with a minimal number of bits\footnote{The standard AC interval updates and the corresponding LLM-AC decoding steps can be found in~\cite{SSCC1}.}. % The resulting bitstream is uniquely decodable under the same model, following the standard AC procedure. For completeness, .

At the receiver, given a channel-decoded bitstream estimate $\hat{\textbf{m}}$ and the same LLM, an arithmetic decoder gradually locates the most appropriate interval corresponding to a token sequence $\tilde{\textbf{t}}$ and deterministically recovers the reconstructed text $\tilde{\textbf{s}}$. Beyond producing the reconstruction, the LLM also yields a sequence-level plausibility for the decoded sequence by accumulating token log-probabilities along the decoding path:
\begin{equation}
\ell(\tilde{\textbf{t}})
\triangleq
\sum_{k=1}^{|\tilde{\textbf{t}}|}
\log \tilde{p}(\tilde{t}_{k}\mid \tilde{\textbf{t}}_{1:k-1}).
\label{eq:seq_ll}
\end{equation}

For convenience, we denote the transmitter-side LLM-driven arithmetic encoding operation by $\mathcal{F}_{\mathrm{LLM}_s}(\cdot)$ and the receiver-side arithmetic decoding and likelihood evaluation operation by $\mathcal{F}_{\mathrm{LLM}_r}(\cdot)$.

% A crucial limitation of LLM-AC in wireless settings stems from its strong prefix dependence. Since each decoded token is determined by the current interval, an incorrect early bit can move the decoder to a wrong sub-interval, which then perturbs subsequent token boundary decisions and triggers error propagation across many tokens. Consequently, even a small number of residual bit errors after channel decoding may catastrophically break lossless source decoding, leading to the pronounced cliff effect in low-SNR regimes. This fragility motivates our receiver-side ICD framework, which exploits a highly reliable auxiliary context transmission to constrain the candidate space, constructs multiple plausible bitstream hypotheses guided by bit-wise reliability, and combines reliability with LLM-implied likelihood to select a robust final reconstruction.

\subsection{Error Correction Code Transformer}\label{2.2_ECCT}

% \begin{figure}[!t]
% \centering
% \includegraphics[width=\linewidth]{Figure ECCT.pdf}
% \caption{ECCT architecture.}
% \label{fig:ECCT}
% \end{figure}

% To enhance the reliability of channel decoding under severe noise, we adopt the ECCT~\cite{ECCT} as a receiver-side refinement module. ECCT is a transformer-based architecture that models position-dependent reliability and incorporates syndrome-based parity-check consistency to enforce code constraints, thereby refining reliability-aware decoding decisions.
We adopt the ECCT\footnote{Complete architectural and training details of ECCT can be found in~\cite{ECCT}.}~\cite{ECCT} as a receiver-side reliability estimation module to quantify bit-wise confidence under severe noise. Without loss of generality, assume that for a Binary Phase-Shift Keying (BPSK)-coded sequence $\textbf{x}_s\in\{\pm 1\}^{N}$, the corresponding channel output can be written as: 
\begin{equation}
\label{eq:channel}
\textbf{y}=h\textbf{x}_s+\textbf{z},\qquad \textbf{z}\sim\mathcal{N}(\textbf{0},\sigma_n^2\mathbf{I}),
\end{equation}
where $h$ is the Additive White Gaussian Noise (AWGN) or Rayleigh fading coefficient, and $\textbf{z}\sim\mathcal{N}(\textbf{0},\sigma_n^2\mathbf{I})$ represents additive Gaussian noise. Following~\cite{ECCT}, the channel observation can also be expressed in a multiplicative form $\textbf{y}=\textbf{x}_s\odot\tilde{\textbf{z}}$, where $\tilde{\textbf{z}}\in\mathbb{R}^{N}$ denotes an equivalent multiplicative disturbance. 

We consider a binary linear block code of length $N$ and dimension $K$ characterized by a parity-check matrix $\mathbf{H}\in\{0,1\}^{(N-K)\times N}$ (e.g., an Low-Density Parity-Check (LDPC) code). Then, given the real-valued channel output $\textbf{y}\in\mathbb{R}^{N}$, ECCT exploits the code constraints $\mathbf{H}$ via the binary syndrome computed from a hard-decision mapping. 
Specifically, define the syndrome vector as:
\begin{equation}
\text{syn}(\textbf{y}) \triangleq \mathbf{H}\,\text{sign\_to\_bin}(\textbf{y})\ (\mathrm{mod}\ 2)\in\{0,1\}^{N-K},
\end{equation}
where $\text{sign\_to\_bin}(\textbf{y}) \triangleq \frac{1}{2}\big(\textbf{1}-\text{sign}(\textbf{y})\big)\in\{0,1\}^{N}$ while $\text{syn}(\textbf{y})=\textbf{0}$ indicates that the hard-decision codeword satisfies all parity checks. 
To jointly incorporate soft channel observations and code-structure information, ECCT forms an augmented input feature vector:
\begin{equation}
\tilde{\textbf{y}} \triangleq \big[\,|\textbf{y}|,\ \text{syn}(\textbf{y})\,\big]\in\mathbb{R}^{2N-K}.
\end{equation}

ECCT processes $\tilde{\textbf{y}}$ 
and outputs an estimate of the disturbance:
\begin{equation}
\hat{\textbf{z}}=\mathcal{F}_{\text{ECCT}}(\tilde{\textbf{y}}),
\end{equation}
which implicitly captures position-dependent decoding uncertainty while respecting parity-check consistency. 
On this basis, ECCT computes a recovered BPSK-coded sequence by
\begin{equation}
\hat{\textbf{x}}_s=\text{sign\_to\_bin}\!\left(\textbf{y}\odot \hat{\textbf{z}}\right)\in\{0,1\}^{N}.
\label{eq:ecct_post}
\end{equation}

In this work, we further convert $\hat{\textbf{z}}$ into a bit-wise reliability vector:
\begin{equation}
\boldsymbol{\rho}=\sigma(\hat{\textbf{z}})\in[0,1]^N,
\end{equation}
where a larger $\rho_d$ indicates higher confidence on the accuracy of the $d$-th decoded bit. 
The reliability $\boldsymbol{\rho}$ is used as the common input to the subsequent ICD modules to guide candidate construction and selection. 
% Specifically, given the real-valued channel output $\textbf y\in\mathbb R^N$, ECCT predicts an equivalent multiplicative disturbance vector $\hat{\textbf z}\in\mathbb{R}^N$, which reflects position-dependent decoding uncertainty. A sigmoid mapping is then applied to $\hat{\textbf z}$ to obtain a bit-wise reliability vector
% \begin{equation}
% \boldsymbol{\rho}
% =
% \sigma(\hat{\mathbf z})
% \in[0,1]^N,
% \end{equation}
% where a larger value of $\rho_{d}$ indicates higher confidence in the $d$-th decoded bit. In this work, $\boldsymbol{\rho}$ serves as the reliability input to the subsequent CCR modules to guide candidate construction and refinement.

% Further architectural details and training procedures of ECCT can be found in Appendix A.

\section{System Model and Problem Formulation}\label{sec3:system_model}

\subsection{System Model}

\begin{figure*}[!t]
\centering
\includegraphics[width=0.75\linewidth]{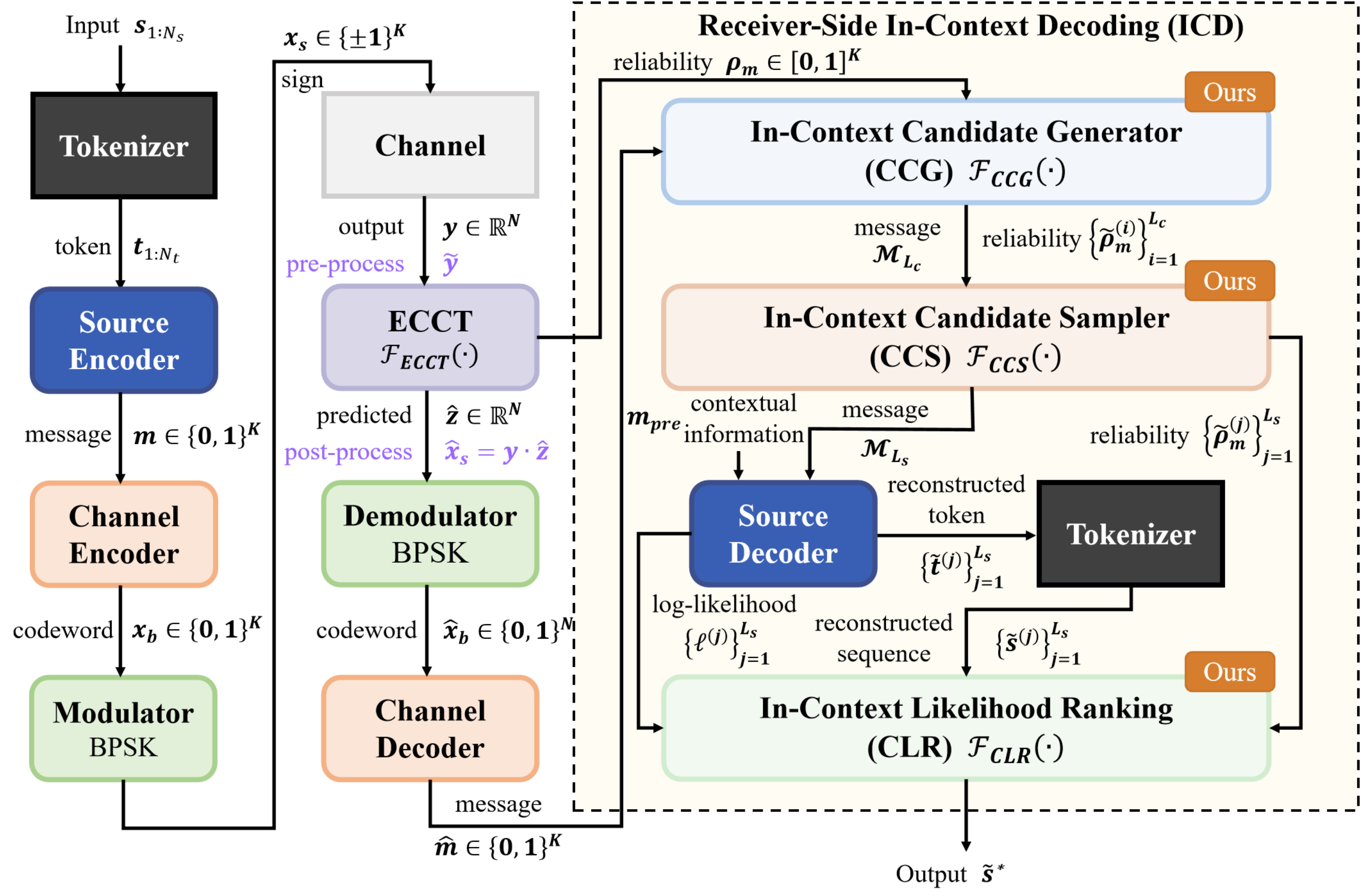}
\caption{Framework of the Proposed ICD-Aided SSCC System.}
\label{fig:system model}
\end{figure*}

We consider an SSCC-based text transmission framework in which the receiver is augmented with ECCT-assisted channel decoding and an LLM-based source decoder equipped with ICD, which comprises CCG, CCS, and CLR, in addition to conventional source and channel coding, modulation, and demodulation. Fig.~\ref{fig:system model} shows the corresponding system block diagram. 

For the input text sequence $\textbf{s}_{1:N_s}$, a lossless source encoder compresses it into a binary message:
\begin{equation}
\textbf{m}=\mathcal{F}_{\mathrm{LLM}_s}(\textbf{s}_{1:N_s})\in\{0,1\}^{K},
\end{equation}
where as mentioned in Sec. \ref{sec2:LLM_coding}, $\mathcal{F}_{\mathrm{LLM}_s}(\cdot)$ denotes a LLM-compatible source encoder that maps text into a length-$K$ bitstream through tokenization and probability-driven AC. 
The same pipeline can also be applied to other discrete sources (e.g., images) after appropriate lossless formatting or compression.

The message $\mathbf{m}$ is then protected by an $(N,K)$ LDPC channel code as:
\begin{equation}
\textbf{x}_{b}=\textbf{m}\mathbf{G}\ (\mathrm{mod}\ 2)\in\{0,1\}^{N},
\end{equation}
where $\mathbf{G}\in\{0,1\}^{K\times N}$ is the generator matrix. The corresponding parity-check matrix $\mathbf{H}\in\{0,1\}^{(N-K)\times N}$ satisfies:
\begin{equation}
\mathbf{G}\mathbf{H}^{\top}=\mathbf{0}.
\end{equation}
Notably, other error correction codes such as Polar codes~\cite{Polar_code} can be applied as well \cite{ECCT}. The binary codeword $\textbf{x}_{b}$ is modulated via BPSK into $\textbf{x}_s=1-2\textbf{x}_{b}\in\{\pm 1\}^{N}$ and transmitted over the channel, as defined in Eq. \eqref{eq:channel}.

At the receiver, ECCT takes the real-valued channel output $\textbf{y}$ and the code constraints as input, and produces a recovered BPSK-coded sequence $\hat{\textbf{x}}_s$, an estimate of disturbance $\hat{\textbf{z}}$, and a bit-wise reliability vector $\boldsymbol{\rho}\in[0,1]^N$ as described in Sec.~\ref{2.2_ECCT}. Hence, hard-decision BPSK demodulation yields the binary codeword estimate:
\begin{equation}
\hat{\textbf{x}}_{b}=\text{sign\_to\_bin}(\hat{\textbf{x}}_{s})\in\{0,1\}^{N}.
\end{equation}

In this work, we adopt a systematic channel encoder, such that the information bits occupy the first $K$ positions of the codeword.
Since ECCT operates as a channel decoder producing an estimated codeword, recovering the information bitstream reduces to extracting the first $K$ bits:
\begin{equation}
\hat{\textbf{m}}=\hat{\textbf{x}}_{b,1:K}\in\{0,1\}^{K},
\end{equation}

Similarly, since the ECCT-derived reliability $\boldsymbol{\rho}\in[0,1]^N$ is defined at the codeword-bit level, we take its subvector aligned with the information positions:
\begin{equation}
\boldsymbol{\rho}_{\text{m}}=\boldsymbol{\rho}_{1:K}\in[0,1]^K .
\end{equation}

Given the channel decoding result $\hat{\textbf{m}}$, as mentioned in Sec. \ref{sec2:LLM_coding}, an LLM-based source decoder $\mathcal{F}_{\text{LLM}_r}(\cdot)$ can eventually produce a reconstructed sequence $\tilde{\textbf{s}}$, along with its linguistic log-likelihood $\ell$: 
\begin{equation}
\left(\tilde{\textbf{s}},\ell\right)
=\mathcal{F}_{\text{LLM}_r}\!\left(\hat{\textbf{m}}\right).
\end{equation}

\subsection{Problem Formulation}

Conventional SSCC systems often suffer from a \emph{cliff effect} in low-SNR scenarios, where crossing below a specific SNR threshold triggers an abrupt rise in the bit error rate and a severe drop in reconstruction fidelity, as illustrated by the second black line in Fig. \ref{fig:overall_performance_awgn} and Fig. \ref{fig:overall_performance_rayleigh}. The primary reason lies in the fact that an incorrect early bit can shift the LLM-based decoder to a wrong sub-interval for arithmetic decoding, which then perturbs subsequent token boundary decisions and triggers error propagation across many tokens. Consequently, even a small number of residual bit errors after channel decoding may catastrophically break lossless source decoding. 

To improve reliability under such unfavorable conditions, we consider receiver-side enhancements that utilize additional contextual information $\mathbf{m}_{\text{pre}}$, which has been obtained through previous transmissions and/or other reliable channels \cite{context_channel1,context_channel2,context_channel3}. More clearly, we argue that the in-context information $\mathbf{m}_{\text{pre}}$ can provide a strong probabilistic prior that helps combat the arising severe noise. In other words, if we can construct multiple candidate bitstreams $\tilde{\textbf{m}}$ from the channel decoding result $\hat{\textbf{m}}$, it will benefit the receiver by evaluating the contextual plausibility (i.e., log-likelihood) $\ell$ of one bit stream and determining the superior one. To support such an intuition, a confidence-ranked candidate set shall be computed from the available contextual information: 
\begin{equation}
\left(\mathcal{M}_{L_c},\{\tilde{\boldsymbol{\rho}}_{\text{m}}^{(i)}\}_{i=1}^{L_c}\right)
= \mathcal{F}_{\text{CCG}}\!\left(\hat{\textbf{m}},\boldsymbol{\rho}_{\text{m}};L_c\right),
\end{equation}
where $\mathcal{F}_{\text{CCG}}(\cdot)$ denotes a CCG function, while $\mathcal{M}_{L_c}=\{\tilde{\textbf{m}}^{(i)}\}_{i=1}^{L_c}$ denotes the top-$L_c$ candidate bitstreams retained according to the ECCT-derived bit-wise reliability. 

However, if all candidates go through the LLM, it will incur substantial computational cost. To better balance cost and diversity, it becomes appealing to design a sampling module $\mathcal{F}_{\text{CCS}}(\cdot)$ that selects $L_s$ candidates from $\mathcal{M}_{L_c}$:
\begin{equation}
\left(\mathcal{M}_{L_s},\{\tilde{\boldsymbol{\rho}}_{\text{m}}^{(j)}\}_{j=1}^{L_s}\right)
= \mathcal{F}_{\text{CCS}}\!\left(\mathcal{M}_{L_c},\{\tilde{\boldsymbol{\rho}}_{\text{m}}^{(i)}\}_{i=1}^{L_c};L_s\right), 
\end{equation}
with $\mathcal{M}_{L_s}=\{\tilde{\textbf{m}}^{(j)}\}_{j=1}^{L_s}$ and $L_s \le L_c - 2$. 
Each selected candidate is then fed into the LLM-based source decoder $\mathcal{F}_{\text{LLM}_r}(\cdot)$, producing a reconstructed sequence and its log-likelihood:
\begin{equation}
\left(\tilde{\textbf{s}}^{(j)},\ell^{(j)}\right)
=\mathcal{F}_{\text{LLM}_r}\!\left(\tilde{\textbf{m}}^{(j)};\textbf{m}_{\text{pre}}\right),\quad j=1,\ldots,L_s.
\label{eq:llm_decode}
\end{equation}
Finally, the ECCT-derived bit-wise reliability and linguistic log-likelihood can undergo a CLR $\mathcal{F}_{\text{CLR}}$ for output the final reconstruction:
\begin{equation}
\tilde{\textbf{s}}^{\ast}
=\mathcal{F}_{\text{CLR}}\!\left(\{\tilde{\textbf{s}}^{(j)}\}_{j=1}^{L_s},\{\ell^{(j)}\}_{j=1}^{L_s},\{\tilde{\boldsymbol{\rho}}_{\text{m}}^{(j)}\}_{j=1}^{L_s}\right),
\end{equation}
where $\mathcal{F}_{\text{CLR}}(\cdot)$ implements a generic confidence--likelihood fusion rule and returns the optimal candidate $\tilde{\textbf{s}}^{\ast}$ as the decoding result.

For notational convenience, we denote the decoder by $\Phi(\cdot) =\mathcal{F}_{\text{CLR}} \circ\mathcal{F}_{\text{LLM}_r} \circ  \mathcal{F}_{\text{CCS}} \circ   \mathcal{F}_{\text{CCG}} (\cdot)$. 
% Building on these observations, our receiver-side algorithm contains two key ingredients: (i) \emph{context-aided refinement}, which leverages $\mathbf{m}_{\text{pre}}$ to guide candidate construction and reduce the search space; and (ii) \emph{confidence-guided bit flipping with budgeted sampling}, which generates and evaluates a limited number of candidate bitstreams by selectively flipping low-confidence bits according to $\boldsymbol{\rho}_{\mathrm{m}}$, and then chooses the most plausible one using both channel reliability and LLM likelihood. Concretely, let $\mathcal{M}=\{\tilde{\mathbf{m}}^{(j)}\}_{j=1}^{L}$ denote a candidate set generated from $(\hat{\mathbf{m}},\boldsymbol{\rho}_{\mathrm{m}},\mathbf{m}_{\text{pre}})$, where each candidate is associated with an aggregate confidence score $C(\tilde{\mathbf{m}}^{(j)};\boldsymbol{\rho}_{\mathrm{m}})$ derived from the ECCT reliability profile. Each candidate is then decoded by a pre-trained LLM-based source decoder, producing a reconstructed sequence and its linguistic plausibility:
Our goal is to design a computationally efficient source and channel decoder $\Phi(\cdot)$ that maps the ECCT-assisted decoding output $\boldsymbol{\rho}_m$, the channel decoding result $\hat{\textbf{m}}$, and the contextual side information $\mathbf{m}_{\text{pre}}$ to the final reconstruction $\hat{\mathbf{s}}^{\ast}$: 
%Specifically, $\pi(\cdot)$ incorporates context $\mathbf{m}_{\text{pre}}$, which can be obtained through previous transmissions and/or other reliable channels, to provides a strong prior that constrains the feasible message space and helps disambiguate multiple plausible bit patterns that may arise under severe noise, and performs confidence-guided bit flipping and budgeted sampling to select a small subset for LLM decoding, thereby reducing the number of expensive LLM calls while maintaining decoding performance. Formally, we consider the following budget-constrained distortion minimization problem
\begin{equation}
\begin{aligned}
&\min\  \mathbb{E}\!\big[ d(\textbf{s}_{1:N_s},\tilde{\textbf{s}}^\ast) \big] \\
\text{s.t.}\ 
\mathrm{Cost}\! &\left(F_{\text{CCS}}\!\circ F_{\text{LLM}_r}\right)\le \mathcal{B},\ 
\tilde{\textbf{s}}^\ast =\Phi(\hat{\textbf{m}},\boldsymbol{\rho}_{\mathrm{m}},\textbf{m}_{\text{pre}}),
\end{aligned}
\label{eq:opt_problem}
\end{equation}
where $d(\cdot,\cdot)$ measures the reconstruction mismatch, and $\mathrm{Cost}\!\left(F_{\text{CCS}}\!\circ F_{\text{LLM}_r}\right)$ denotes the expected computational cost of LLM decoding, which is proportional to the number of candidates eventually sampled by $F_{\mathrm{CCS}}$. We leave the design of the modules in ICD in Sec. \ref{sec:algorithm}.

\section{The Framework of ICD} \label{sec:algorithm}

% \subsection{Overview and Design Rationale}
\begin{figure*}[t]
    \centering
    \includegraphics[width=\linewidth]{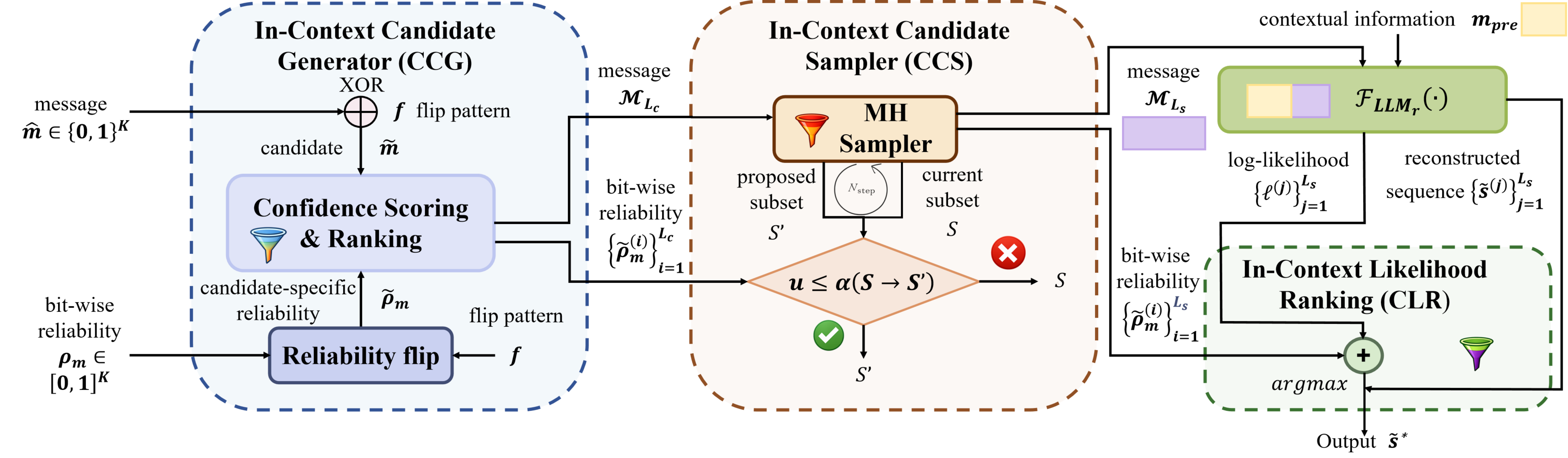}
    \vspace{-1cm}
    \caption{Framework of ICD.}
    \label{fig:CD}
\end{figure*}

Fig.~\ref{fig:CD} illustrates the overall framework of the proposed ICD. %To mitigate the cliff effect in the low-SNR regime under a strict computational budget, ICD exploits both ECCT-derived bit-wise reliability and contextual side information delivered over a URLLC control channel to construct, sample and refine a compact set of plausible message candidates for LLM-based source decoding. The framework comprises three sequential modules, namely CCG, CCS, and CLR, which together balance reconstruction reliability and decoding complexity.

\subsection{In-Context Candidate Generator} \label{subsec:ccg}

The role of CCG is to produce a confidence-ranked candidate set for the information-bit sequence.
Given the channel decoding output $\hat{\textbf{m}}\in\{0,1\}^{K}$,
we construct a candidate set
$\mathcal{M}\triangleq\{\tilde{\textbf{m}}^{(d)}\}_{d=1}^{2^{K}}$
by enumerating all possible bit-flip patterns. 
Specifically, let $\textbf{f}^{(d)}=[f^{(d)}_{1},\ldots,f^{(d)}_{K}]\in\{0,1\}^{K}$ denote the $d$-th flip pattern, where $f^{(d)}_{p}=1$ indicates flipping the $p$-th bit and $f^{(d)}_{p}=0$ otherwise, with $d=1,\ldots,2^{K}$. 
Given $\textbf{f}^{(d)}$, the corresponding candidate bitstream
$\tilde{\textbf{m}}^{(d)}\in\mathcal{M}$ is constructed entry-wise as:
\begin{equation}
\tilde{m}^{(d)}_{p}
=
\hat{m}_{p}\oplus f^{(d)}_{p},\quad p=1,\ldots,K,
\label{eq:ccg_candidate_construct}
\end{equation}
where $\oplus$ denotes the XOR operation.

%To rank the constructed candidates in $\mathcal{M}$, we leverage the ECCT-derived bit-wise reliability to derive a candidate-specific reliability profile for each $\tilde{\textbf m}^{(d)}$.
Given the ECCT-derived bit-wise reliability vector $\boldsymbol{\rho}_{\mathrm m}=[\rho_{\mathrm m,1},\ldots,\rho_{\mathrm m,K}]\in[0,1]^{K}$, we define the candidate-specific reliability as $\tilde{\boldsymbol{\rho}}_{\mathrm m}^{(d)}=[\tilde{\rho}_{\mathrm m,1}^{(d)},\ldots,\tilde{\rho}_{\mathrm m,K}^{(d)}]$, whose $p$-th entry under candidate $\tilde{\mathbf{m}}^{(d)}$ can be computed as:
\begin{equation}
\tilde{\rho}^{(d)}_{\mathrm m,p}
=
(1-f^{(d)}_{p})\,\rho_{\mathrm m,p}
+
f^{(d)}_{p}\,(1-\rho_{\mathrm m,p}),
\quad p=1,\ldots,K.
\label{eq:ccg_reliability_flip}
\end{equation}
Accordingly, the aggregate confidence score of candidate $\tilde{\mathbf{m}}^{(d)}$ is defined as:
\begin{equation}
\mathcal{F}_{\mathrm{Conf}}\!\left(\tilde{\mathbf{m}}^{(d)}\right)
=
\sum_{p=1}^{K}\tilde{\rho}^{(d)}_{\mathrm m,p},
\quad d=1,\ldots,2^{K}.
\label{eq:ccg_conf_score}
\end{equation}

Finally, CCG retains the top-$L_c$ candidates with the largest confidence scores, yielding a confidence-ranked candidate set:
% \begin{equation}
% \mathcal{M}_{L_c}
% =
% \operatorname*{Top}_{L_c}\Big\{
% \tilde{\mathbf{m}}^{(c)}\in\mathcal{M}\ \big|\ 
% \mathcal{F}_{\mathrm{Conf}}\!\left(\tilde{\mathbf{m}}^{(c)}\right)
% \Big\}.
% \label{eq:ccg_topLc}
% \end{equation}
\begin{equation}
\mathcal{M}_{L_c}
=
\Big\{\tilde{\mathbf{m}}^{(i)}\ \Big|\ i \in 
\operatorname*{arg\,top}_{L_c, 1\le d \le2^K}
\mathcal{F}_{\mathrm{Conf}}\!\big(\tilde{\mathbf{m}}^{(d)}\big)
\Big\}\quad i=1,\ldots,L_{c}.
\label{eq:ccg_topLc}
\end{equation}

\subsection{In-Context Candidate Sampler}\label{subsec:dpcs}

\begin{figure*}[!t]
\centering
\includegraphics[width=.5\linewidth]{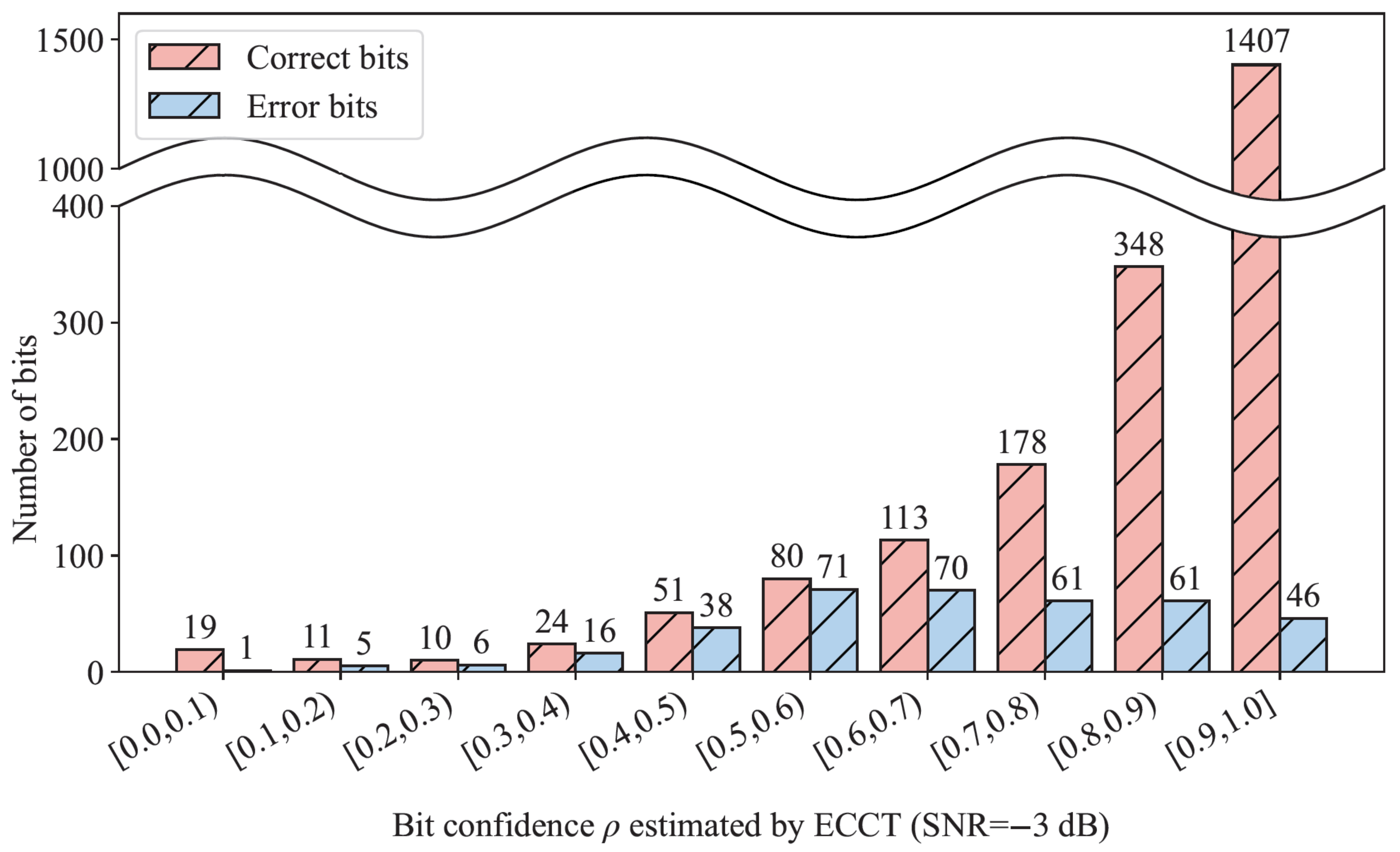}
\caption{Distribution of correct and erroneous bits across confidence levels.}
\label{fig:bar_compare}
\end{figure*}

CCG leverages the ECCT-derived bit reliability to score candidate bitstreams and efficiently prune an exponentially large flip space, resulting in a confidence-ranked set of top-$L_c$ candidates. While this confidence-guided pruning is essential for computational tractability, the resulting ranking is inevitably affected by reliability miscalibration, as evidenced by Fig.~\ref{fig:bar_compare}, which shows that a non-negligible number of erroneous bits persist even in high-confidence regimes. Moreover, candidates with high confidence scores are often generated by flipping similar subsets of low-reliability bits, leading to strong correlations among top-ranked candidates and consequently small pairwise Hamming distances. 

As a consequence, directly truncating the ranked list to the top-$L_s$ candidates would produce a highly concentrated and redundant subset, severely limiting the effective exploration space available to the downstream LLM-based decoder. To address this issue, CCS performs diversity-preserving subset selection by explicitly balancing aggregate confidence and inter-candidate Hamming diversity, so that with a limited number of LLM decoding attempts $L_s$, the selected candidates span multiple plausible error patterns and substantially increase the probability of including a decodable or correct bitstream.
% Nevertheless, although CCG effectively prunes the candidate space, directly decoding all $L_c$ candidates using an LLM may still exceed the computational budget. DPCS further reduces the candidate set to a smaller subset while maintaining representational diversity.

This subset-level trade-off naturally leads to a large search space. For typical values of $L_c$ and $L_s$, the number of feasible subsets scales as $\binom{L_c}{L_s}$, making exhaustive evaluation costly in practice. This motivates the use of approximate methods that can efficiently explore the subset space without incurring prohibitive computational overhead. To this end, CCS adopts a sampling-based approach inspired by
Metropolis-Hastings (MH)~\cite{MH1,MH2}, which enables efficient exploration of the subset space
with a controllable computational budget while naturally introducing randomness
to promote diversity.

Specifically, CCS formulates the subset selection problem as sampling from a structured probability distribution defined over subsets of $\mathcal{M}_{L_c}$. To this end, we construct a Markov chain $\{\mathcal{S}^{(t)}\}_{t\ge 0}$ on the state space $\Omega
\triangleq
\big\{\mathcal{S}\subseteq\mathcal{M}_{L_c} \,\big|\, |\mathcal{S}|=L_s\big\}$,
where each state $\mathcal{S}^{(t)}\in\Omega$ is a candidate subset of fixed cardinality $L_s$
to be forwarded to the subsequent LLM-based decoding stage. 
Subsequently, we define the target distribution over $\Omega$ as:
\begin{equation}
\pi(\mathcal{S}) = \exp\big(-\beta E(\mathcal{S})\big), \quad \mathcal{S}\in\Omega,
\end{equation}
where $\beta>0$ controls the sharpness and the energy $E(\mathcal{S})$ jointly captures
ECCT-derived reliability and inter-candidate diversity, given by
\begin{equation}
E(\mathcal{S})
=
-
\sum_{\tilde{\textbf{m}}\in\mathcal{S}} \mathcal{F}_{\mathrm{Conf}}\!\left(\tilde{\textbf{m}}\right)
\;-\;
\lambda
\sum_{\substack{\tilde{\textbf{m}},\,\tilde{\textbf{m}}'\in\mathcal{S}\\ \tilde{\textbf{m}}\neq \tilde{\textbf{m}}'}}
\mathcal{F}_{\mathrm{Hamming}}\!\left(\tilde{\textbf{m}},\tilde{\textbf{m}}'\right),
\label{eq:dpcs_energy}
\end{equation}
where $\mathcal{F}_{\mathrm{Conf}}(\cdot)$ denotes the ECCT-derived aggregate confidence while $\mathcal{F}_{\mathrm{Hamming}}(\cdot)$ measures pairwise Hamming distance, and $\lambda\ge 0$ balances reliability preservation and diversity encouragement.

Given the current state $\mathcal{S}\in\Omega$, we generate a proposal state $\mathcal{S}'\in\Omega$
via a single replacement move $\mathcal{S}'=(\mathcal{S}\setminus \{\tilde{\textbf{m}}\})\cup \{\tilde{\textbf{m}}''\}$, by changing a uniformly sampled $\tilde{\textbf{m}} \in \mathcal{S}$ to  $\tilde{\textbf{m}}'' \in (\mathcal{M}_{L_c}\setminus\mathcal{S})$, where $\setminus$ denotes the set difference operator. 
This defines the proposal distribution:
\begin{equation}
q(\mathcal S'|\mathcal S)
=
\frac{1}{L_s\,(L_c-L_s)},
\end{equation}
and $q(\mathcal S'|\mathcal S)=q(\mathcal S|\mathcal S')$ holds due to the symmetry of the replacement move.

The Markov chain transition kernel $P$ is then induced by the MH accept-reject rule:
\begin{equation}
P(\mathcal{S}'|\mathcal{S})
=
q(\mathcal{S}'|\mathcal{S})\,\alpha(\mathcal{S}\to\mathcal{S}'),
\qquad \mathcal{S}'\neq \mathcal{S},
\end{equation}
with acceptance probability:
\begin{align}
\alpha(\mathcal{S}\rightarrow\mathcal{S}')
&=\min\left\{1,\ \frac{\pi(\mathcal S')q(\mathcal S\mid\mathcal S')}{\pi(\mathcal S)q(\mathcal S'\mid\mathcal S)}\right\}\nonumber\\
&=\min\left\{1,\ \frac{\pi(\mathcal S')}{\pi(\mathcal S)}\right\}\nonumber\\
&=
\min\!\left\{
1,\;
\exp\big(-\beta[E(\mathcal{S}')-E(\mathcal{S})]\big)
\right\}.
\label{eq:mh_acceptance_prob}
\end{align}
A uniform random number $u\sim\mathcal{U}(0,1)$ is then drawn, and the proposal is accepted if $u\le \alpha(\mathcal{S}\rightarrow\mathcal{S}')$. Otherwise, the chain remains at $\mathcal{S}$. For completeness, the self-transition probability is:
\begin{equation}
P(\mathcal{S}|\mathcal{S})
=
1-\sum_{\mathcal{S}'\in\Omega,\ \mathcal{S}'\neq\mathcal{S}} P(\mathcal{S}'|\mathcal{S}).
\label{eq:dpcs_self_transition}
\end{equation}

After a fixed number of MH iterations $N_\text{step}$, CCS outputs a sampled subset $\mathcal{M}_{L_s}$ of size $L_s$, which exhibits both high aggregate confidence and sufficient diversity across candidate bitstreams. This sampling-enhanced selection avoids excessive concentration on highly similar candidates that arise from correlated low-confidence bit flips, while maintaining tractable computational complexity. %As shown in Theorem~\ref{thm:dpcs_convergence}, the chain admits the target distribution $\pi$ as its stationary distribution and converges to $\pi$.

\begin{definition}[Stationarity under Detailed Balance~\cite{levin2017markov}]
\label{lem:detailed_balance_stationary}
Let $\{\mathcal S^{(t)}\}_{t\ge 0}$ be a Markov chain on a finite state space $\Omega$ with transition kernel $P$.
If there exists a distribution $\pi$ such that the detailed balance condition holds
\begin{equation}
\pi(\mathcal S)\,P(\mathcal{S}'|\mathcal{S})
=
\pi(\mathcal S')\,P(\mathcal{S}|\mathcal{S}'),
\quad \forall \mathcal S,\mathcal S'\in \Omega,
\end{equation}
then $\pi$ is a stationary distribution of the Markov chain.
\end{definition}

\begin{theorem}[Properties of CCS]
\label{thm:ccs_mh_conditions}
Consider the Markov chain $\{\mathcal S^{(t)}\}_{t\ge 0}$ induced by the CCS sampler on the state space $\Omega$, it satisfies the following properties: (1) The state space $\Omega$ is finite. (2) The transition kernel $P$ meets the detailed balance condition with respect to the target distribution $\pi$. (3) The Markov chain is irreducible and aperiodic on $\Omega$.
\end{theorem}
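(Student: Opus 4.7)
The plan is to establish the three properties in turn, exploiting the standard structure of a finite Metropolis--Hastings chain with a symmetric proposal. Property (1) is essentially built into the construction, property (2) is the heart of the statement and follows from the classical MH cancellation, and property (3) combines a short constructive path argument with the existence of at least one self-looping state.

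For property (1), $\Omega$ consists of all $L_s$-element subsets of the finite set $\mathcal{M}_{L_c}$, so $|\Omega|=\binom{L_c}{L_s}<\infty$ is immediate.

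For property (2), I would split the verification of the detailed balance equation by cases on the relationship between $\mathcal S$ and $\mathcal S'$. The diagonal case $\mathcal S=\mathcal S'$ is trivial. If $\mathcal S\neq\mathcal S'$ and $|\mathcal S\triangle\mathcal S'|>2$, then $\mathcal S'$ is not reachable from $\mathcal S$ via a single replacement move, so $q(\mathcal S'|\mathcal S)=q(\mathcal S|\mathcal S')=0$ and both sides vanish. The only substantive case is $|\mathcal S\triangle\mathcal S'|=2$, for which the symmetric proposal gives $q(\mathcal S'|\mathcal S)=q(\mathcal S|\mathcal S')=\tfrac{1}{L_s(L_c-L_s)}$. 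Plugging the acceptance rule \eqref{eq:mh_acceptance_prob} into $\pi(\mathcal S)P(\mathcal S'|\mathcal S)$ collapses both sides of the detailed balance equation to $q(\mathcal S'|\mathcal S)\cdot\min\{\pi(\mathcal S),\pi(\mathcal S')\}$, which is the standard MH identity.

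For property (3), irreducibility follows constructively: any two states $\mathcal S_A,\mathcal S_B\in\Omega$ can be connected by at most $L_s$ single-element swaps, and each swap has strictly positive probability because $q>0$ on single-replacement neighbors and $\alpha>0$ (the target $\pi(\mathcal S)=\exp(-\beta E(\mathcal S))$ is strictly positive everywhere), yielding $P^k(\mathcal S_B|\mathcal S_A)>0$ for some $k\le L_s$. For aperiodicity, I would exhibit a state with a positive self-transition. Taking $\mathcal S^\star\in\arg\min_{\mathcal S\in\Omega}E(\mathcal S)$, every neighbor $\mathcal S'$ satisfies $E(\mathcal S')\ge E(\mathcal S^\star)$, hence $\alpha(\mathcal S^\star\!\to\!\mathcal S')=\exp(-\beta[E(\mathcal S')-E(\mathcal S^\star)])\le 1$, with strict inequality for at least one neighbor whenever $E$ is non-constant on $\Omega$. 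Therefore $P(\mathcal S^\star|\mathcal S^\star)>0$ by \eqref{eq:dpcs_self_transition}, which together with irreducibility forces the period to divide $1$.

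The main subtlety I anticipate is the aperiodicity argument in the edge case where $E$ is constant on $\Omega$: then $\pi$ is uniform, every proposal is accepted, and the chain degenerates to a simple random walk on the Johnson graph $J(L_c,L_s)$. This degeneracy can either be ruled out by a mild non-degeneracy remark on the confidence--diversity energy $E$ in \eqref{eq:dpcs_energy} (which is generically satisfied by the CCG-produced candidate pool), or handled directly by noting that $J(L_c,L_s)$ contains odd cycles (e.g., triangles) whenever $L_s\ge 2$ and $L_c-L_s\ge 2$, again giving period one.
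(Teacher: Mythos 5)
Your handling of finiteness, detailed balance, and irreducibility matches the paper's proof in substance: $|\Omega|=\binom{L_c}{L_s}$; the detailed-balance verification is the standard MH cancellation (the paper phrases it as a without-loss-of-generality case $E(\mathcal S')\le E(\mathcal S)$ and reduces both sides to $\exp(-\beta E(\mathcal S))\,q(\mathcal S'|\mathcal S)$, which is the same identity as your $q(\mathcal S'|\mathcal S)\min\{\pi(\mathcal S),\pi(\mathcal S')\}$; you additionally dispose of the non-neighbor and diagonal cases, which the paper leaves implicit); and irreducibility by chaining single swaps, each with $q>0$ and $\alpha>0$ because $\pi>0$ everywhere, is exactly the paper's argument made slightly more explicit.

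The one genuine gap is in your primary aperiodicity argument. You assert that the global minimizer $\mathcal S^\star$ of $E$ satisfies $P(\mathcal S^\star|\mathcal S^\star)>0$ ``whenever $E$ is non-constant on $\Omega$.'' That does not follow: $E$ can be non-constant on $\Omega$ while still being constant on the closed single-swap neighborhood of $\mathcal S^\star$ (the minimum attained on a whole ball around $\mathcal S^\star$), in which case every proposal from $\mathcal S^\star$ is accepted with probability one and the self-transition probability in \eqref{eq:dpcs_self_transition} is exactly zero. Your fallback does not rescue this situation, because you only invoke it when $E$ is constant on all of $\Omega$. The repair is immediate and is in fact what the paper does: for \emph{every} state $\mathcal S$, the $2$-step return $\mathcal S\to\mathcal S_1\to\mathcal S$ and the $3$-step return around a triangle $\mathcal S\to(\mathcal S\setminus\{a\})\cup\{b\}\to(\mathcal S\setminus\{a\})\cup\{c\}\to\mathcal S$ both have strictly positive probability, using only $\alpha>0$ (the acceptance ratio in \eqref{eq:mh_acceptance_prob} is a positive exponential) and the standing assumption $L_c\ge L_s+2$; since $\gcd(2,3)=1$, the period is one unconditionally, with no case split on the energy \eqref{eq:dpcs_energy}. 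So you should promote your ``edge case'' Johnson-graph argument to the main argument; note also that triangles already exist for $L_s\ge 1$ (not only $L_s\ge 2$), and that the odd cycle must be paired with the even two-step return to conclude that the period divides $\gcd(2,3)=1$, as the paper states explicitly.
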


\begin{proof}\mbox{}\\[-1em]
\begin{itemize}
\item \emph{Finite state space:} Since $\mathcal{M}_{L_c}$ contains exactly $L_c$ candidates, the number of distinct
subsets of size $L_s$ is $|\Omega|=\binom{L_c}{L_s}$, which is a finite integer for $0 < L_s < L_c$.

\item \emph{Detailed balance:} 
% Since $u\sim\mathcal U(0,1)$ and the proposal is accepted if $u\le \alpha(\mathcal S\to\mathcal S')$, the acceptance probability equals $\Pr(u\le \alpha)=\alpha(\mathcal S\to\mathcal S')$. For any two distinct subsets $\mathcal S,\mathcal S' \in \Omega$, the MH transition probability is given by
% \begin{align}
% P(\mathcal{S}'|\mathcal{S})
% &=q(\mathcal S' \mid \mathcal S)\,\alpha(\mathcal S\to\mathcal S')\\
% &=
% q(\mathcal S' \mid \mathcal S)
% \min\!\left\{
% 1,\;
% \exp\!\big(-\beta [E(\mathcal S') - E(\mathcal S)]\big)
% \right\}.
% \end{align}
% Since the proposal distribution is symmetric,
% $q(\mathcal S' \mid \mathcal S) = q(\mathcal S \mid \mathcal S')$,
% we have the following two case.
Without loss of generality, we consider two states $\mathcal S$ and $\mathcal S'$, which satisfy  $E(\mathcal S')\le E(\mathcal S)$, thus $\alpha(\mathcal S\to\mathcal S')=1$ and $\alpha(\mathcal S'\to\mathcal S) = \exp\!\big(-\beta [E(\mathcal S') - E(\mathcal S)]\big)$. Therefore,
\begin{align}
\pi(\mathcal S)P(\mathcal S'|\mathcal S)
&=\pi(\mathcal S)\,q(\mathcal S'|\mathcal S)\,\alpha(\mathcal S\to\mathcal S')\nonumber\\
&=\pi(\mathcal S)\,q(\mathcal S'|\mathcal S)\nonumber\\
&=\exp\!\big(-\beta E(\mathcal S)\big)\,q(\mathcal S'|\mathcal S).
\label{eq:db_case1_left}
\end{align}
On the other hand,
\begin{align}
\pi(\mathcal S')P(\mathcal S|\mathcal S')
&=\pi(\mathcal S')\,q(\mathcal S|\mathcal S')\,\alpha(\mathcal S'\to\mathcal S)\nonumber\\
&=\pi(\mathcal S')\,q(\mathcal S|\mathcal S')\exp\!\big(-\beta [E(\mathcal S) - E(\mathcal S')]\big)\nonumber\\
&=\exp\!\big(-\beta E(\mathcal S)\big)\,q(\mathcal S|\mathcal S').
\label{eq:db_case1_right}
\end{align}
Taking account of 
$q(\mathcal S' \mid \mathcal S) = q(\mathcal S \mid \mathcal S')$ and comparing \eqref{eq:db_case1_left} and \eqref{eq:db_case1_right}, we obtain
$\pi(\mathcal S)P(\mathcal S'|\mathcal S)=\pi(\mathcal S')P(\mathcal S|\mathcal S')$.

% In the second case, $E(\mathcal S') > E(\mathcal S)$,
% thus $\alpha(\mathcal S\to\mathcal S')=\exp\!\big(-\beta [E(\mathcal S')-E(\mathcal S)]\big)$ and
% $\alpha(\mathcal S'\to\mathcal S)=1$.

% Therefore,
% \begin{align}
% \pi(\mathcal S)P(\mathcal S'|\mathcal S)
% =\pi(\mathcal S)\,q(\mathcal S'|\mathcal S)\,\alpha(\mathcal S\to\mathcal S') 
% =\pi(\mathcal S)\,q(\mathcal S'|\mathcal S)\exp\!\big(-\beta [E(\mathcal S')-E(\mathcal S)]\big) 
% =\exp\!\big(-\beta E(\mathcal S')\big)\,q(\mathcal S'|\mathcal S).
% \label{eq:db_case2_left}
% \end{align}
% On the other hand,
% \begin{align}
% \pi(\mathcal S')P(\mathcal S|\mathcal S')
% =\pi(\mathcal S')\,q(\mathcal S|\mathcal S')\,\alpha(\mathcal S'\to\mathcal S) 
% =\pi(\mathcal S')\,q(\mathcal S|\mathcal S') 
% =\exp\!\big(-\beta E(\mathcal S')\big)\,q(\mathcal S'|\mathcal S),
% \label{eq:db_case2_right}
% \end{align}
% Comparing \eqref{eq:db_case2_left} and \eqref{eq:db_case2_right}, we obtain
% $\pi(\mathcal S)P(\mathcal S'|\mathcal S)=\pi(\mathcal S')P(\mathcal S|\mathcal S')$.

% Combining the two cases, we conclude that for all $\mathcal S\neq \mathcal S'$,
% it follows that
% \begin{equation}
% \pi(\mathcal S) P(\mathcal{S}'|\mathcal{S})
% =
% \pi(\mathcal S') P(\mathcal{S}|\mathcal{S}'),
% \end{equation}
% which verifies the detailed balance condition with respect to $\pi$.

\item \emph{Irreducibility:} 
% For any two subsets
% $\mathcal S,\mathcal S' \in \Omega$,
% let $\Gamma = \mathcal S \setminus \mathcal S'$ and
% $\Delta = \mathcal S' \setminus \mathcal S$.
% By sequentially replacing each element in $\Gamma$ with an element in $\Delta$, $\mathcal S$ can be transformed into $\mathcal S'$ through a finite sequence of single-replacement moves. 
By construction, each such move has a positive proposal probability $q(\mathcal S'|\mathcal S)$ and a nonzero acceptance probability $\alpha(\mathcal{S}\to\mathcal{S}')$. Therefore, $\mathcal S'$ is reachable from $\mathcal S$ with a positive probability and the chain is irreducible.

\item  \emph{Aperiodicity:} For an arbitrary state $\mathcal S\in\Omega$, by construction, $P(\mathcal{S}'|\mathcal{S})>0$. Then we have the following two cases. In the first case, suppose there exists a state $\mathcal S'\in\Omega\setminus\{\mathcal S\}$ satisfying $E(\mathcal S')>E(\mathcal S)$ (i.e., an energy-increasing proposal).
Then the MH acceptance probability satisfies $\alpha(\mathcal S\to\mathcal S')<1$.
% \begin{equation}
% \alpha(\mathcal S\to\mathcal S')
% =
% \min\{1,\exp(-\beta(E(\mathcal S')-E(\mathcal S)))\}
% =
% \exp(-\beta(E(\mathcal S')-E(\mathcal S)))<1.
% \end{equation}
Consequently, Eq.~\eqref{eq:dpcs_self_transition} can be further written as
\begin{align}
P(\mathcal S|\mathcal S)
&=
\sum_{\mathcal N\in\Omega\setminus\{\mathcal S\}}
q(\mathcal N\mid\mathcal S)\Bigl(1-\alpha(\mathcal S\to\mathcal N)\Bigr)\nonumber\\
&\overset{(a)}{\ge}
q(\mathcal S'|\mathcal S)\bigl(1-\alpha(\mathcal S\to\mathcal S')\bigr)>0,
\end{align}
where $\mathcal N$ ranges over all states other than $\mathcal S$ in $\Omega$ and the inequality (a) is obtained by selecting the term corresponding to $\mathcal S'$ in the summation. Therefore, $\mathcal S$ has a self-loop, implying that its period is one. % Denoting the function that computes the period by $d(\cdot)$, we have $d(\mathcal S)=1$.

In the second case, for all states $\mathcal S'\in\Omega\setminus\{\mathcal S\}$, $E(\mathcal S')\le E(\mathcal S)$. Then, $
\alpha(\mathcal S\to\mathcal S')=1$. In this case, $P(\mathcal S|\mathcal S)=0$. So if we can find
two return paths of co-prime lengths, we will establish aperiodicity by definition.

\indent \emph{(i) A 2-step return.}
Pick any state $\mathcal S_1\in\Omega\setminus\{\mathcal S\}$,
% Then
% \begin{equation}
% P(\mathcal S,\mathcal S_1)=q(\mathcal S_1|\mathcal S)\alpha(\mathcal S\to\mathcal S_1)
% =q(\mathcal S_1|\mathcal S)>0.
% \end{equation}
% For the reverse move,
Note that $E(\mathcal S)\ge E(\mathcal S_1)$ implies
\begin{align}
\alpha(\mathcal S_1\to\mathcal S)
&=
\min\{1,\exp(-\beta(E(\mathcal S)-E(\mathcal S_1)))\}\nonumber\\
&=
\exp(-\beta(E(\mathcal S)-E(\mathcal S_1)))>0.
\label{eq:transition}
\end{align}
% Also $q(\mathcal S|\mathcal S_1)>0$ by the same swap mechanism.
Thus,
\begin{align}
P^2(\mathcal S|\mathcal S)
&=
\sum_{\mathcal N\in\Omega\setminus\{\mathcal S\}}
P(\mathcal N | \mathcal S)\,P(\mathcal S | \mathcal N)\nonumber\\
&\overset{(b)}{\ge}
P(\mathcal S_1|\mathcal S)P(\mathcal S|\mathcal S_1)\nonumber\\
&=
q(\mathcal S_1|\mathcal S)\cdot q(\mathcal S|\mathcal S_1)\cdot \alpha(\mathcal S_1\to\mathcal S)
>0,
\end{align}
where the inequality (b) is obtained by selecting the term corresponding to $\mathcal S_1$ in the summation.

\emph{(ii) A 3-step return.}
Given that $L_c \ge L_s + 2$ in our algorithm configuration, we randomly select one distinct elements $a\in\mathcal S$
and two distinct candidates $b,c\in \Omega \setminus\mathcal S$.
Define
\begin{align}
\mathcal S_1 &= (\mathcal S\setminus\{a\})\cup\{b\},\nonumber\\
\mathcal S_2 &= (\mathcal S_1\setminus\{b\})\cup\{c\}
= (\mathcal S\setminus\{a\})\cup\{c\}.
\end{align}
Then $\mathcal S\to\mathcal S_1\to\mathcal S_2\to\mathcal S$ forms a valid
three-step transition path of the Markov chain and $\mathcal S = (\mathcal S_2\setminus\{c\})\cup\{a\}$. By construction, $E(\mathcal S_1)\le E(\mathcal S)$ and hence $\alpha(\mathcal S\to\mathcal S_1)=1$. 
For the transition $\mathcal S_1\to\mathcal S_2$, the MH acceptance probability satisfies $
\alpha(\mathcal S_1\to\mathcal S_2)>0$,
which holds regardless of the ordering between $E(\mathcal S_1)$ and $E(\mathcal S_2)$.

For the last move, $\alpha(\mathcal S_2\to\mathcal S)>0$ holds as shown in Eq.~\eqref{eq:transition}.
Since each corresponding proposal probability is strictly positive, considering the specific
three-step return path $\mathcal S\to\mathcal S_1\to\mathcal S_2\to\mathcal S$ yields
\begin{equation}
P^3(\mathcal S|\mathcal S)
\ge
P(\mathcal S_1|\mathcal S)\,P(\mathcal S_2 |\mathcal S_1)\,P(\mathcal S | \mathcal S_2)
>0.
\end{equation}

Therefore, $P^2(\mathcal S|\mathcal S)>0$ and $P^3(\mathcal S|\mathcal S)>0$. 
% Recall that the period of a state is defined as
% \[
% d(\mathcal S)=\gcd\{t\ge 1:\; P^t(\mathcal S,\mathcal S)>0\},
% \]
% where $\gcd(\cdot)$ denotes the greatest common divisor.
% It follows that $d(\mathcal S)$ divides both $2$ and $3$, and thus $d(\mathcal S)\mid \gcd(2,3)=1$, implying $d(\mathcal S)=1$.
Hence, the Markov chain is aperiodic.
\end{itemize}
This completes the proof.
\end{proof}

\begin{lemma}[Ergodicity and Convergence~\cite{Norris1998MarkovChain}]
\label{lem:ergodic_convergence}
Let $\{\mathcal S^{(t)}\}_{t\ge 0}$ be a Markov chain on a finite state space with stationary distribution $\pi$.
If the chain is irreducible and aperiodic, then $\pi$ is unique and the distribution of $\mathcal S^{(t)}$ converges to $\pi$ in total variation distance, regardless of the initial state.
\end{lemma}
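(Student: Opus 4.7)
The plan is to establish the three claims—existence of a stationary distribution, its uniqueness, and convergence in total variation from an arbitrary initial distribution—by exploiting the finiteness of $\Omega$ together with the irreducibility and aperiodicity assumed in the hypothesis. Because $|\Omega|<\infty$, the transition kernel $P$ is a finite stochastic matrix, so the entire argument can be carried out with classical linear-algebraic and coupling tools rather than general-state-space machinery, and the main structural work has already been performed in Theorem~\ref{thm:ccs_mh_conditions}.

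For existence, I would first observe that the probability simplex on $\Omega$ is a nonempty compact convex subset of $\mathbb{R}^{|\Omega|}$ that is mapped into itself by the continuous map $\mu\mapsto \mu P$, so Brouwer's fixed-point theorem supplies a stationary distribution; in the CCS setting, the distribution $\pi$ constructed via detailed balance in Theorem~\ref{thm:ccs_mh_conditions} already witnesses this. For uniqueness, I would use irreducibility to show that any stationary distribution must assign strictly positive mass to every state (otherwise invariance together with accessibility would propagate a zero to all states), and then invoke the Perron--Frobenius theorem for irreducible nonnegative matrices to conclude that the eigenspace of $P^{\top}$ associated with eigenvalue $1$ is one-dimensional, forcing $\pi$ to be unique.

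For the convergence statement I would adopt a coupling argument built on a Doeblin-type minorization. The first step is to argue that irreducibility combined with aperiodicity on a finite state space yields an $n_0\in\mathbb{N}$ such that $P^{n_0}(\mathcal{S},\mathcal{S}')>0$ for every pair $\mathcal{S},\mathcal{S}'\in\Omega$, which in turn delivers a uniform minorization $P^{n_0}(\mathcal{S},\cdot)\ge \varepsilon\,\pi(\cdot)$ for some $\varepsilon\in(0,1]$. I would then run two copies of the chain on a common probability space, one initialized at an arbitrary distribution $\mu$ and the other at $\pi$, and attempt to merge them every $n_0$ steps with success probability at least $\varepsilon$. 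The classical coupling inequality $\|\mu P^{t}-\pi\|_{\mathrm{TV}}\le \mathbb{P}(\tau_{\mathrm{couple}}>t)$ together with the geometric tail bound on $\tau_{\mathrm{couple}}$ then delivers the desired total-variation convergence, independent of the initial state.

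The main obstacle, though standard, is establishing the uniform positivity of $P^{n_0}$ from the abstract hypotheses. Aperiodicity guarantees that the set of return times to any fixed state is a sub-semigroup of $\mathbb{N}$ with greatest common divisor one, so a short number-theoretic argument (the Chicken McNugget / Schur-type lemma) shows that it eventually contains every sufficiently large integer; irreducibility then extends this positivity to arbitrary state pairs by concatenating a finite deterministic path of positive-probability transitions. Once this minorization is in place, both the uniqueness of $\pi$ via contraction on the space of signed measures and the geometric decay of $\|\mu P^{t}-\pi\|_{\mathrm{TV}}$ follow in a few lines, completing the proof.
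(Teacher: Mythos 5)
The paper does not prove this lemma at all: it is imported verbatim as a standard result from the Markov-chain literature (the citation to Norris) and is only \emph{used} in the proof of Theorem~\ref{thm:dpcs_convergence}. Your proposal therefore cannot match or diverge from a paper proof; what you have done is supply a correct, self-contained derivation of the classical convergence theorem for finite, irreducible, aperiodic chains. The argument is sound: the number-theoretic step (return times form an additive sub-semigroup with gcd one, hence contain all sufficiently large integers) combined with irreducibility and finiteness of $\Omega$ gives a uniform $n_0$ with $P^{n_0}(\mathcal S,\mathcal S')>0$ for all pairs; the Doeblin minorization $P^{n_0}(\mathcal S,\cdot)\ge\varepsilon\,\pi(\cdot)$ is legitimate because irreducibility forces $\pi$ to have full support, so $\varepsilon=\min_{\mathcal S,\mathcal S'}P^{n_0}(\mathcal S,\mathcal S')/\pi(\mathcal S')$ is a valid choice in $(0,1]$; and the coupling inequality with a geometric tail on the coupling time delivers both uniqueness (as a byproduct of the contraction) and total-variation convergence from any start. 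Two small remarks: the existence step via Brouwer is superfluous here, since the lemma's hypothesis already posits that $\pi$ is a stationary distribution (and in the CCS application it is exhibited explicitly through detailed balance in Theorem~\ref{thm:ccs_mh_conditions}); and the Perron--Frobenius appeal for uniqueness is likewise redundant once the minorization-based contraction is in place, since the latter already forces any two stationary distributions to coincide. Neither redundancy is an error.
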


\begin{theorem}[Stationarity and Convergence of CCS]
\label{thm:dpcs_convergence}
Consider the Markov chain $\{\mathcal S^{(t)}\}_{t\ge 0}$ induced by the CCS sampler.
 $\pi$ is the unique stationary distribution, and for any initial state $\mathcal S^{(0)}$, the distribution of $\mathcal S^{(t)}$ converges to $\pi$ in total variation distance, that is
\begin{equation}
\lim_{t\to\infty}
\big\|
P^t(\mathcal S^{(0)},\cdot)-\pi(\cdot)
\big\|_{\mathrm{TV}}
=
0.
\end{equation}
\end{theorem}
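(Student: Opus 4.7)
The plan is to derive Theorem~\ref{thm:dpcs_convergence} as a direct corollary of Theorem~\ref{thm:ccs_mh_conditions}, combining Definition~\ref{lem:detailed_balance_stationary} to certify stationarity and Lemma~\ref{lem:ergodic_convergence} to upgrade this to uniqueness and total-variation convergence. All the technical work (finiteness, detailed balance, irreducibility, aperiodicity) has already been discharged in Theorem~\ref{thm:ccs_mh_conditions}, so the proposed argument is essentially a bookkeeping step that assembles these facts in the correct order.

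First, I would invoke property~(2) of Theorem~\ref{thm:ccs_mh_conditions}, which states that the CCS transition kernel $P$ satisfies the detailed balance condition with respect to $\pi$ on $\Omega$. Feeding this into Definition~\ref{lem:detailed_balance_stationary} immediately yields that $\pi$ is a stationary distribution of the Markov chain $\{\mathcal S^{(t)}\}_{t\ge 0}$. A minor subtlety worth addressing in passing is normalization: the expression $\pi(\mathcal S)=\exp(-\beta E(\mathcal S))$ introduced in the paper is only defined up to a partition function $Z=\sum_{\mathcal S\in\Omega}\exp(-\beta E(\mathcal S))$; since $|\Omega|=\binom{L_c}{L_s}$ is finite by property~(1) and every summand is strictly positive, $Z$ is finite and positive, so $\pi(\cdot)/Z$ is a genuine probability distribution, and detailed balance is preserved under this global rescaling.

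Next, I would bundle properties~(1) and~(3) of Theorem~\ref{thm:ccs_mh_conditions}, namely that $\Omega$ is finite and the chain is both irreducible and aperiodic, together with the stationary distribution $\pi$ secured above, and apply Lemma~\ref{lem:ergodic_convergence}. This immediately delivers both claims of the theorem: $\pi$ is the unique stationary distribution, and for any initial state $\mathcal S^{(0)}\in\Omega$,
\begin{equation}
\lim_{t\to\infty}\bigl\|P^t(\mathcal S^{(0)},\cdot)-\pi(\cdot)\bigr\|_{\mathrm{TV}}=0,
\end{equation}
which is exactly the statement of Theorem~\ref{thm:dpcs_convergence}. Since every hypothesis of Lemma~\ref{lem:ergodic_convergence} has already been verified upstream, there is no genuine obstacle in this proof; the only point requiring minor care is the normalization remark above, which could otherwise leave the reader uncertain whether $\pi$ as written is literally a probability distribution.
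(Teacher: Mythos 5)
Your proposal is correct and follows essentially the same route as the paper's own proof: invoke Theorem~\ref{thm:ccs_mh_conditions} for detailed balance, apply Definition~\ref{lem:detailed_balance_stationary} to obtain stationarity, then combine finiteness, irreducibility, and aperiodicity with Lemma~\ref{lem:ergodic_convergence} to conclude uniqueness and total-variation convergence. Your added remark about normalizing $\pi$ by the partition function is a small but worthwhile clarification that the paper omits.
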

% Let
% \(
% \Omega \triangleq \{\mathcal S \subseteq \mathcal M_{L_c} : |\mathcal S| = L_s\}
% \)
% denote the state space of all size-$L_s$ subsets, which is finite with cardinality $|\Omega|=\binom{L_s}{L_c}$. Recall that the target distribution is given by
% \begin{equation}
% \pi(\mathcal S)
% =
% \exp\!\big(-\beta E(\mathcal S)\big).
% \end{equation}

% Consider the MH Markov chain $\{\mathcal S^{(t)}\}_{t\ge 0}$ induced by the DPCS proposal described in Sec.~\ref{subsec:dpcs}, then the following statements hold
% \begin{enumerate}
% \item The Markov chain satisfies detailed balance with respect to $\pi$, and hence
% $\pi$ is a stationary distribution.
% \item The chain is irreducible and aperiodic on the finite state space $\Omega$.
% \item The chain admits a unique stationary distribution $\pi$, and for any initial
% state $\mathcal S^{(0)} \in \Omega$,
% \begin{equation}
% \lim_{t\to\infty}
% \big\|
% P^t(\mathcal S^{(0)},\cdot) - \pi(\cdot)
% \big\|_{\mathrm{TV}}
% =
% 0,
% \end{equation}
% where $\|\cdot\|_{\mathrm{TV}}$ denotes the total variation distance.
% \end{enumerate}
% \end{theorem}

\begin{proof}
% By construction, the CCS sampler operates on a finite state space and satisfies the detailed balance condition with respect to $\pi$.
% By Lemma~\ref{lem:detailed_balance_stationary}, $\pi$ is a stationary distribution of the induced Markov chain.
% Furthermore, the CCS transition mechanism ensures irreducibility and aperiodicity.
% Therefore, by Lemma~\ref{lem:ergodic_convergence}, the stationary distribution $\pi$ is unique and the distribution of $\mathcal S^{(t)}$ converges to $\pi$ in total variation distance for any initial state.
By Theorem~\ref{thm:ccs_mh_conditions}, the CCS chain operates on a finite state space and satisfies detailed balance with respect to $\pi$.
Hence, by Definition~\ref{lem:detailed_balance_stationary}, $\pi$ is a stationary distribution of the chain.
Moreover, Theorem~\ref{thm:ccs_mh_conditions} also establishes that the chain is irreducible and aperiodic.
Therefore, by Lemma~\ref{lem:ergodic_convergence}, the stationary distribution $\pi$ is unique and the distribution of $\mathcal S^{(t)}$ converges to $\pi$ in total variation distance for any initial state $\mathcal S^{(0)}$.
\end{proof}

\subsection{In-Context Likelihood Ranking}
% After candidate generation and diversity-preserving sampling, each retained bitstream candidate $\tilde{\textbf{m}}^{(j)}\in\mathcal{M}_{L_s}$ is passed to a pre-trained LLM-based source decoder, yielding a reconstructed sequence $\tilde{\textbf{s}}^{(j)}$ along with its log-likelihood $\ell^{(j)}$
% \begin{equation}
% \left(\tilde{\textbf{s}}^{(j)},\ell^{(j)}\right)
% =\mathcal{F}_{\mathrm{LLM}_r}\!\left(\tilde{\textbf{m}}^{(j)}\right),\quad j=1,\ldots,L_s.
% \end{equation}
For each retained bitstream candidate $\tilde{\textbf{m}}^{(j)}\in\mathcal{M}_{L_s}$, CLR integrates ECCT-derived reliability with source-level linguistic plausibility to select the final reconstruction. Formally, CLR computes a fused score for each candidate by combining the ECCT-derived reliability profile $\tilde{\boldsymbol{\rho}}_{\mathrm{m}}^{(j)}$ and the LLM log-likelihood $\ell^{(j)}$, and outputs:

\begin{equation}
\tilde{\mathbf{s}}^{\ast}
=\tilde{\mathbf{s}}^{(j^\ast)},
\qquad
j^\ast
=\arg\max_{1\le j\le L_s}
\Bigl\{
\mathcal{F}_{\mathrm{Conf}}\!\left(\tilde{\mathbf{m}}^{(j)}\right)
+\alpha\,\ell^{(j)}
\Bigr\},
\label{eq:rlfd_explicit}
\end{equation}

% As a result, the receiver remains robust in the low-SNR regime where conventional channel decoding enters the waterfall region and reconstruction quality degrades sharply. 
In summary, the pseudocode of the proposed receiver-side ICD procedure is provided in Algorithm \ref{alg1}.

\begin{algorithm}[t]
%\floatname{algorithm}{Algorithm}%¸ü¸ÄËã·¨Ç°×ºÃû³Æ
%\renewcommand{\algorithmicrequire}{\textbf{Input:}}%¸ü¸ÄÊäÈëÃû³Æ
%\renewcommand{\algorithmicensure}{\textbf{Output:}}%¸ü¸ÄÊä³öÃû³Æ
\caption{The Procedure of ICD.}
\label{alg1}
\begin{algorithmic}[1]
    \REQUIRE Channel-decoded bitstream $\hat{\textbf m}$, ECCT-derived reliability $\boldsymbol{\rho}_{\mathrm m}$, contextual information $\textbf m_{\mathrm{pre}}$, candidate pool size $L_c$, sampled subset size $L_s$, MH steps $N_{\mathrm{step}}$, temperature $\beta$, diversity weight $\lambda$, fusion weight $\alpha$;
    \ENSURE Final reconstruction $\tilde{\mathbf s}^\ast$;
    \STATE \textbf{Stage I: CCG --- In-Context Candidate Generator}
    % \STATE $\bar{\mathbf m} \leftarrow$ performs in-context alignment by Eq.~\eqref{eq:ccg_context_overwrite};
    \STATE $\mathcal M \leftarrow$ constructs candidates $\tilde{\textbf m}$ by Eq.~\eqref{eq:ccg_candidate_construct}.
    \STATE $F_{\mathrm{Conf}}(\tilde{\textbf m}) \leftarrow$ computes confidence scores by Eq.~\eqref{eq:ccg_reliability_flip}--Eq.~\eqref{eq:ccg_conf_score};
    \STATE $\mathcal M_{L_c} \leftarrow$ retains top-$L_c$ candidates by Eq.~\eqref{eq:ccg_topLc};
    \STATE \textbf{Stage II: CCS --- In-Context Candidate Sampler}
    \STATE $\mathcal S^{(0)} \leftarrow$ initializes a subset in $\Omega=\big\{\mathcal{S}^{(0)}\subseteq\mathcal{M}_{L_c} \,\big|\, |\mathcal{S}^{(0)}|=L_s\big\}$;
    \FOR{$t=0$ to $N_{\mathrm{step}}-1$}
    \STATE $\mathcal S^{(t)\prime} \leftarrow$ generates a proposal state by a single-element replacement in $\mathcal S^{(t)}$;
    \STATE $\alpha(\mathcal{S}^{(t)}\rightarrow\mathcal{S}^{(t)\prime}) \leftarrow $ computes the MH acceptance probability by Eq.~\eqref{eq:mh_acceptance_prob};
    \STATE $\mathcal S^{(t+1)} \leftarrow$ accepts/rejects $\mathcal S^{(t)\prime}$ according to $u\sim\mathcal U(0,1)$ and $\alpha(\mathcal{S}^{(t)}\rightarrow\mathcal{S}^{(t)\prime})$;
\ENDFOR
    \STATE $\mathcal M_{L_s} \leftarrow $ outputs the selected subset;
    \STATE \textbf{Stage III: CLR --- In-Context Likelihood Ranking}
    \STATE $(\tilde{\mathbf s}^{(j)},\ell^{(j)}) \leftarrow $ decodes and computes likelihood with contextual information by Eq.~\eqref{eq:llm_decode};
    \STATE $\tilde{\mathbf s}^{\ast} \leftarrow $ selects the final reconstruction by Eq.~\eqref{eq:rlfd_explicit}.
\end{algorithmic}
\end{algorithm}
\section{Simulation Settings and Results}\label{sec4_Experiment}

\subsection{Simulation Settings}

\begin{table}[!t]
\caption{Main hyperparameter settings used in the experiments.}
\label{tab2:settings}
\centering
\begin{tabular*}{0.405\textwidth}{clc}
\toprule
\textbf{Model} & \textbf{Hyperparameter} & \textbf{Value} \\
\midrule

\multirow{5}{*}{ECCT}
& Learning rate & $10^{-4}$ \\
& Batch size & 128 \\
& Number of decoder layers & 6 \\
& Embedding dimension & 32 \\
& Number of attention heads & 8 \\
\midrule

\multirow{6}{*}{DeepSC}
& Learning rate & $10^{-4}$ \\
& Batch size & 64 \\
& Number of encoder/decoder layers & 4 \\
& Embedding dimension & 128 \\
& FFN hidden dimension\footnote & 512 \\
& Number of attention heads & 8 \\
\midrule

\multirow{6}{*}{UT}
& Learning rate & $10^{-4}$ \\
& Batch size & 64 \\
& Number of encoder/decoder layers & 3 \\
& Embedding dimension & 128 \\
& FFN hidden dimension\footnotemark[3] & 1024 \\
& Number of attention heads & 8 \\
\bottomrule
\end{tabular*}
\end{table}
\footnotetext[3]{FFN denotes the Feed-Forward Network.}

We evaluate the proposed receiver-side decoding framework ICD against conventional SSCC pipelines~\cite{SSCC1} and representative JSCC-based semantic communication schemes \cite{DeepSC,UT,UT_quanti} over both AWGN and Rayleigh fading channels. To ensure a fair and reproducible comparison, all methods are tested on the same text source and are assessed using widely adopted Natural Language Processing (NLP)-oriented quality metrics \cite{BLEU,BERT}.

\textbf{Dataset and source coding.} 
The adopted dataset is the proceedings of the European Parliament~\cite{dataset}, which consists of around $2.0$ million sentences. The dataset is pre-processed to have sentence lengths ranging from $4$ to $30$ words for training and testing. Unless otherwise stated, we adopt GPT-$2$~\cite{GPT2} base ($124.44$M parameters) as the default source model. The arithmetic coder is configured with a precision of $31$ bits, which provides a practical trade-off between numerical stability and coding accuracy. For each corpus, some leading words, regarded as the contextual information $\textbf{m}_{\text{pre}}$, have been correctly decoded. 

\textbf{Channel coding and ECCT configuration.} 
For the SSCC baselines, we employ an LDPC($49, 24$) code, corresponding to a code rate of approximately $0.5$. The ECCT architecture and training hyperparameters, as well as those of the JSCC baselines, are summarized in Table~\ref{tab2:settings}.

\textbf{Baselines.} 
We include the following representative baselines
\begin{itemize}
    \item Huffman with ECCT, where the LLM-driven arithmetic source coding/decoding is replaced by Huffman coding, while keeping the rest of the SSCC pipeline unchanged.
    \item LLM-AC with ECCT, which combines ECCT-assisted channel decoding with LLM-based lossless source decoding under the SSCC pipeline.
    \item DeepSC~\cite{DeepSC} and UT~\cite{UT}, which serve as representative JSCC baselines for comparison.
    \item UT with quantization~\cite{UT_quanti}, where the continuous latent representation is mapped to a fixed-length bitstream ($30$ bits) for transmission.
\end{itemize}

\textbf{Evaluation metrics.} 
Reconstruction quality is quantified by Bilingual Evaluation Understudy (BLEU)~\cite{BLEU} and a Bidirectional Encoder Representations from Transformers (BERT)-based semantic similarity metric~\cite{BERT}, which are widely used in text reconstruction tasks to assess lexical fidelity and semantic preservation, respectively.

On the other hand, when comparing heterogeneous pipelines (e.g., SSCC vs.\ JSCC), different schemes generally transmit different amounts of physical-layer payload $N$ for the same source content. Therefore, to maintain consistent energy accounting, consistent with \cite{SSCC1}, 
% \textbf{Unified SNR definition for fair cross-paradigm comparison.}
% Most SemCom studies report performance versus the conventional per-bit SNR
% \begin{equation}
% \mathrm{SNR}_{\mathrm{b}} \triangleq 10\log\!\left(\frac{E_{\mathrm{b}}}{N_0}\right)\ \mathrm{dB},
% \end{equation}
% where $E_{\mathrm{b}}$ denotes the energy per transmitted bit after source/channel coding and digital modulation, and $N_0$ is the noise power spectral density.  
% However, when comparing heterogeneous pipelines (e.g., SSCC vs.\ JSCC), different schemes generally transmit different amounts of physical-layer payload for the same source content. As a result, $\mathrm{SNR}_{\mathrm{b}}$ does not account for the unequal energy expenditure induced by different numbers of transmitted channel payload units.
% To enable a fair comparison, 
we normalize SNR using a fixed total transmission energy budget. Let $E_{\mathrm{total}}$ denote the total energy consumed by sending $N_{\mathrm{um}}$ payload units through the channel in an LLM-based SSCC reference system~\cite{SSCC1}, and define the unified SNR according to
\begin{align}
\mathrm{SNR}
&=
10\log\!\left(\frac{E_{\mathrm{total}}}{N_0\,N}\right)\nonumber\\
&=
10\log\!\left(\frac{E_{\mathrm{total}}}{N_0\,N_{\mathrm{um}}}\right)
+
10\log\!\left(\frac{N_{\mathrm{um}}}{N}\right)\nonumber\\
&=
\mathrm{SNR}_{\mathrm{unified}}
+10\log\!\left(\frac{N_{\mathrm{um}}}{N}\right).
\label{eq:unified_snr_bits}
\end{align}
Here, $\mathrm{SNR}_{\mathrm{unified}}$ serves as the independent variable that aligns $E_{\mathrm{total}}$ across different methodologies, while the ratio $N_{\mathrm{um}}/N$ compensates for the differing physical-layer payload lengths. Meanwhile, for JSCC methods such as DeepSC~\cite{DeepSC} and UT~\cite{UT}, the transmitter commonly outputs continuous latent vectors, and the channel input is represented by floating-point values rather than binary bits. To maintain consistent energy accounting, we treat one transmitted float as consuming the energy of multiple bits. With float16, one float corresponds to 16 bits, resulting in an additional offset $10\log(16)\approx 12.041$ dB.
% \begin{equation}
% 10\log(16)\approx 12.041\ \mathrm{dB}.
% \end{equation}
Accordingly, the unified evaluation metric is given by Eq.~\eqref{eq:unified_snr}
% \begin{align*}
% \mathrm{SNR}=
% \begin{cases}
% \mathrm{SNR}_{\mathrm{unified}}
% +10\log\!\left(\frac{N_{\mathrm{um}}}{N}\right)
% +12.041, & \text{float-based JSCC (float16)},\\[4pt]
% \mathrm{SNR}_{\mathrm{unified}}
% +10\log\!\left(\frac{N_{\mathrm{um}}}{N}\right), & \text{bit-oriented transmission}.
% \end{cases}
% \label{eq:unified_snr}
% \end{align*}

\begin{figure*}[!t]
\centering
\begin{equation}
\label{eq:unified_snr}
\mathrm{SNR}=
\begin{cases}
\mathrm{SNR}_{\mathrm{unified}}
+10\log\!\left(\frac{N_{\mathrm{um}}}{N}\right)
+12.041, & \text{float-based JSCC (float16)},\\[4pt]
\mathrm{SNR}_{\mathrm{unified}}
+10\log\!\left(\frac{N_{\mathrm{um}}}{N}\right), & \text{bit-oriented transmission}.
\end{cases}
\end{equation}
\end{figure*}

% In all evaluations, we evaluate each scheme across a range of $\mathrm{SNR}_{\mathrm{unified}}$ values using the above energy-normalized SNR mapping.

\subsection{Overall System-Level Performance Comparison}

Fig.~\ref{fig:overall_performance_awgn} and Fig.~\ref{fig:overall_performance_rayleigh} present the system-level overall performance comparison between the proposed method and multiple baselines over  AWGN and Rayleigh channels, respectively, evaluated by BLEU-1, BLEU-4, and semantic similarity. The results demonstrate that the proposed receiver-side ICD consistently achieves the best performance in both channel environments, outperforming all baselines. The performance improvement is most evident in the low-SNR regime, where residual bit errors after channel decoding can trigger interval misselection in LLM-driven arithmetic decoding, resulting in a pronounced performance cliff. In this regard, ICD leverages the contextual information to stabilize the initial interval localization and prevent catastrophic early deviation, then constructs multiple bitstream candidates via reliability-guided bit flipping, while a sampling step further enhances candidate diversity and controls decoding overhead.

\begin{figure*}[!t]
\centering
\includegraphics[width=\linewidth]{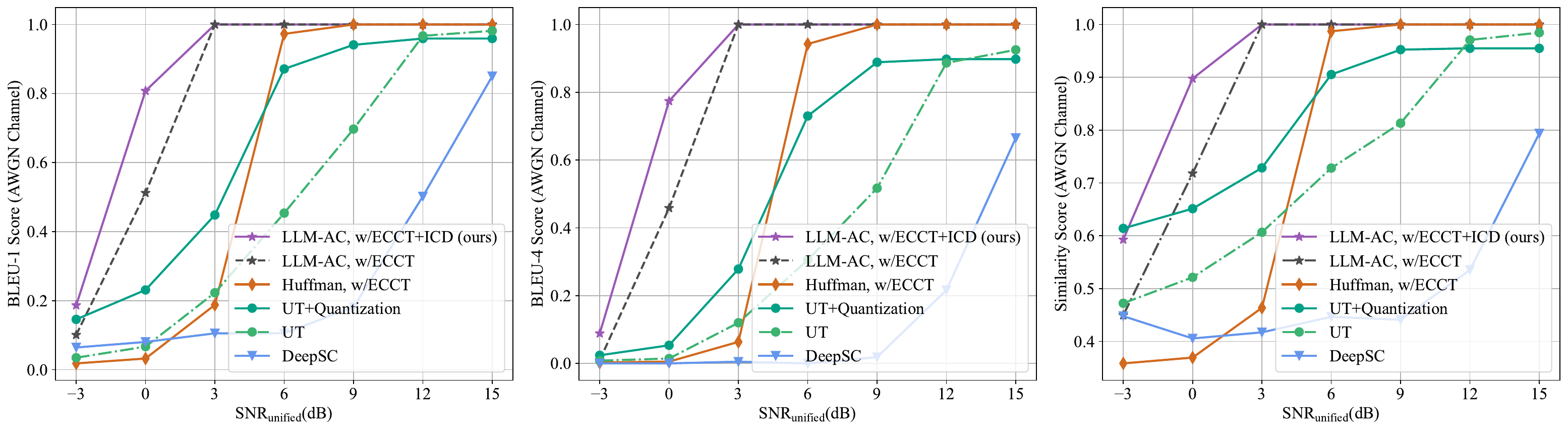}
\caption{Overall system-level performance under AWGN channels.}
\label{fig:overall_performance_awgn}
\end{figure*}

\begin{figure*}[!t]
\centering
\includegraphics[width=\linewidth]{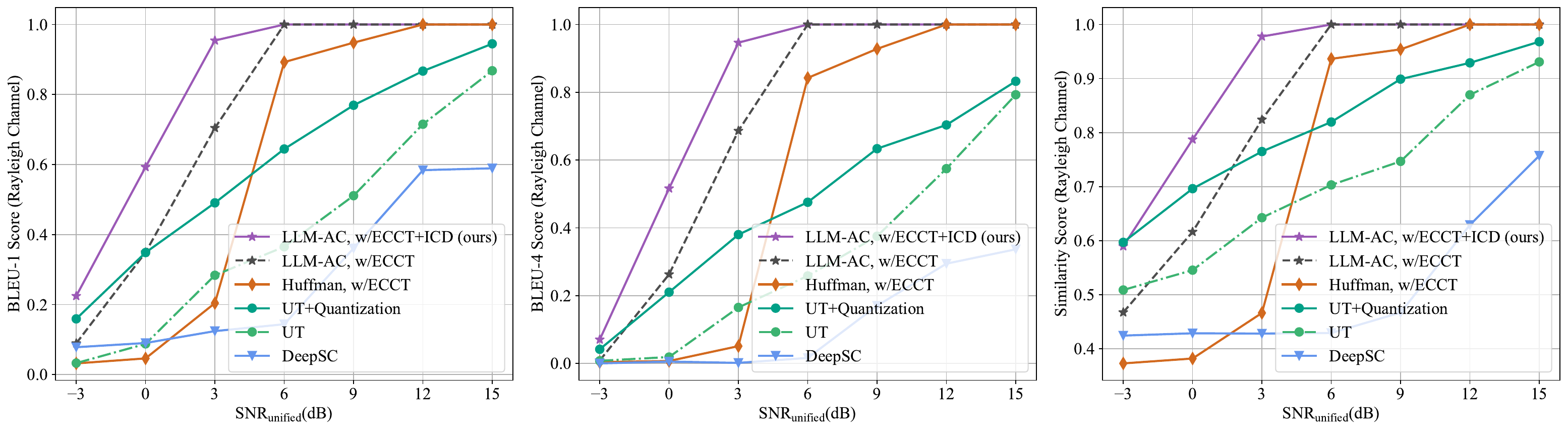}
\caption{Overall system-level performance under Rayleigh channels.}
\label{fig:overall_performance_rayleigh}
\end{figure*}

\subsection{Impact of Confidence-Ranked and Sampled Candidates Hyperparameters}

Fig.~\ref{fig:heatmap_awgn} and Fig.~\ref{fig:heatmap_rayleigh} show the BLEU-4 performance of ICD under two coupled hyperparameters, namely the size of the confidence-ranked candidate pool and the number of sampled candidates, which determines how many candidate bitstreams are ultimately decoded. A consistent trend is that using a moderate confidence-ranked pool together with a moderate number of sampled candidates yields the most reliable gains across SNR points. Further increasing the number of sampled candidates does not reliably yield better performance, suggesting that ICD is driven primarily by forming high-quality, reliability-consistent candidates, rather than simply decoding as many candidates as possible. Conversely, using too few sampled candidates under-explores the candidate space and makes ICD more prone to missing the correct interval updating path at low SNR. A similar non-monotonic effect is observed for the confidence-ranked pool size. When the ranked pool is reduced, ICD may discard the true bitstream prematurely, biasing the sampled set toward a narrow subset. By contrast, a larger ranked pool may include additional low-confidence candidates, potentially reducing sampling focus and weakening the reliability–likelihood fusion through less effective exploration.

\begin{figure*}[!t]
\centering
\includegraphics[width=\linewidth]{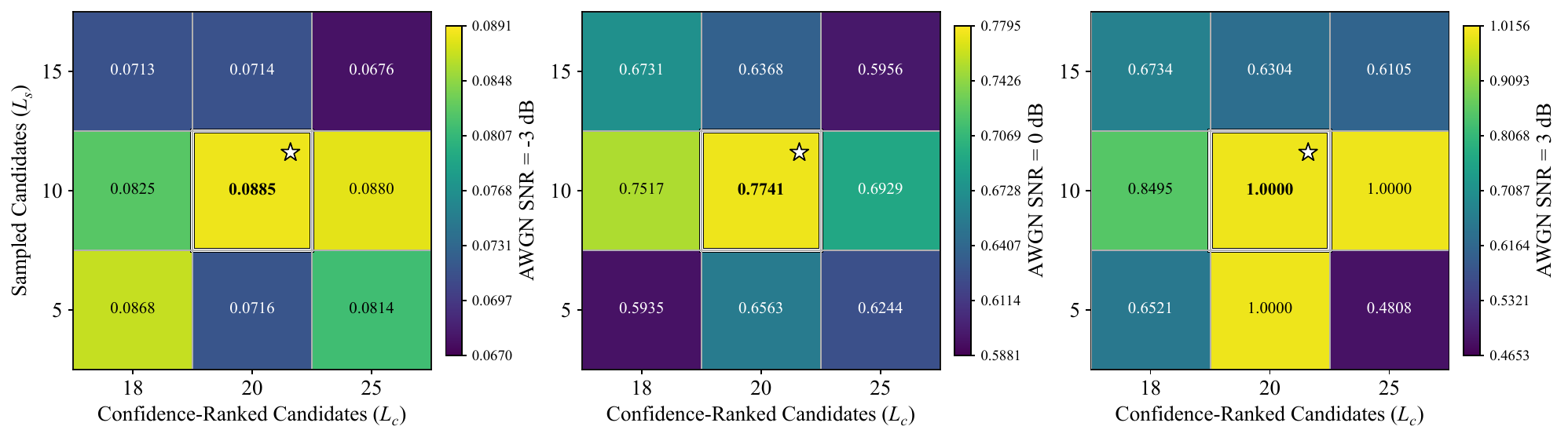}
\caption{BLEU-4 heatmaps of ICD performance under AWGN channels.}
\label{fig:heatmap_awgn}
\end{figure*}

\begin{figure*}[!t]
\centering
\includegraphics[width=\linewidth]{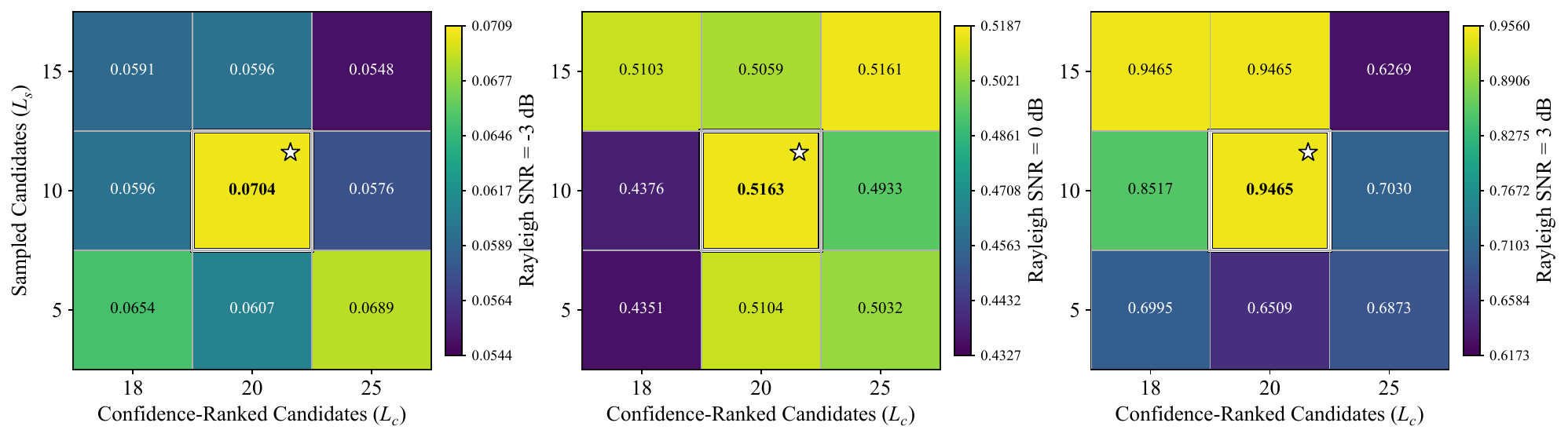}
\caption{BLEU-4 heatmaps of ICD performance under Rayleigh channels.}
\label{fig:heatmap_rayleigh}
\end{figure*}

\begin{table}[t]
\centering
\caption{Model sizes of the LLM backbones.}
\label{tab:llm_backbones}
\begin{tabular*}{0.28\textwidth}{cc}
\toprule
Model & Parameters (M) \\
\midrule
Base model   & 124.44 \\
Medium model & 379.99 \\
Large model  & 811.78 \\
XL model     & 1607.94 \\
\bottomrule
\end{tabular*}
\end{table}

\subsection{Scalability of the LLM Backbone}

We examine how the LLM backbone affects the performance of the proposed ICD framework. Specifically, we evaluate ICD with progressively larger size of LLMs while keeping all other system components unchanged, and the parameter counts of the evaluated backbones are summarized in Table~\ref{tab:llm_backbones}. As shown in Fig.~\ref{fig:GPT_scales}, consistent but moderate gains are observed as the backbone size increases. This trend indicates that ICD can directly benefit from improved language modeling quality, as larger models provide more accurate conditional probability estimates for candidate generation and refinement. At the same time, the performance gap between different model scales remains bounded, suggesting that ICD is largely model-agnostic and already achieves strong performance with the base model. These results confirm that the proposed framework scales favorably with model capacity while maintaining robustness across different backbone choices.

\begin{figure*}[!t]
\centering
\includegraphics[width=\linewidth]{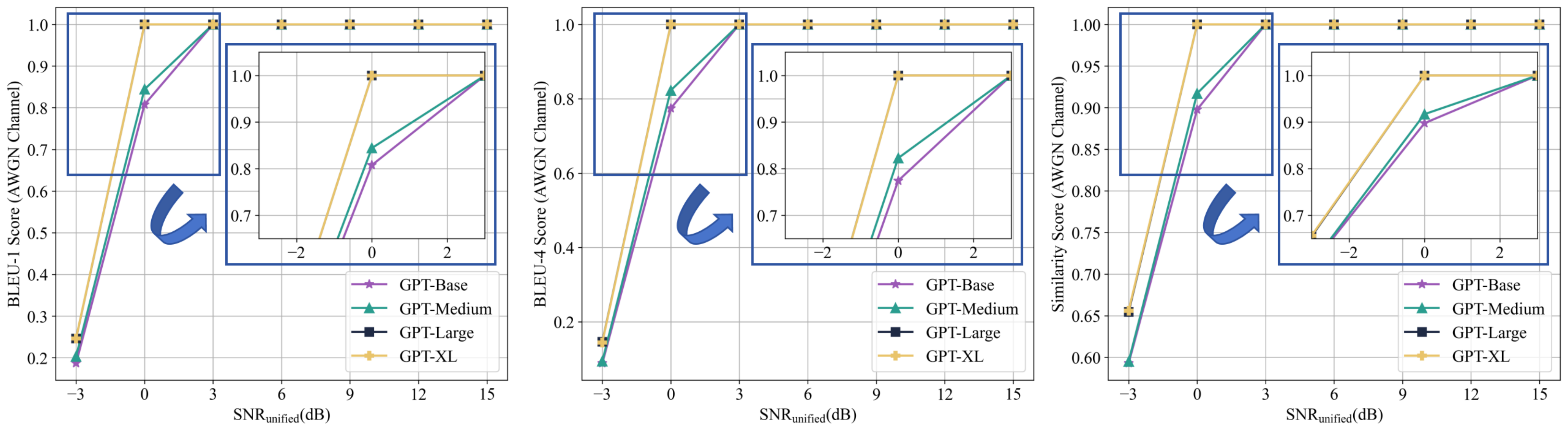}
\caption{Scalability evaluation with varying LLM backbone sizes under AWGN channels.}
\label{fig:GPT_scales}
\end{figure*}

\begin{table*}[!t]
\caption{Results under AWGN channels with different code rates.}
\label{tab:ablation_coderate_awgn}
\centering
\begin{tabular*}{\textwidth}{@{\extracolsep{\fill}}
>{\centering\arraybackslash}p{0.12\textwidth}
>{\centering\arraybackslash}p{0.08\textwidth}
p{0.3\textwidth}
>{\centering\arraybackslash}p{0.10\textwidth}
>{\centering\arraybackslash}p{0.10\textwidth}
>{\centering\arraybackslash}p{0.14\textwidth}}
\toprule
\multirow{2}{*}{Code rate $(N,K)$} & \multirow{2}{*}{SNR (dB)} & \multirow{2}{*}{Method} & \multicolumn{3}{c}{Metrics} \\
\cmidrule(lr){4-6}
& & & BLEU1 & BLEU4 & Similarity Score \\
\midrule

% ===================== (49,24) =====================
\multirow{9}{*}{(49,24)}
& \multirow{3}{*}{-3}  & LLM-AC, w/ECCT+ICD  & \textbf{0.1867} & \textbf{0.0885} & \textbf{0.5928} \\
&                           & LLM-AC, w/ECCT+Context & 0.1832 & 0.0714 & 0.5921 \\
&                           & LLM-AC, w/ECCT & 0.1007 & 0.0097 & 0.4496 \\
\cmidrule(lr){2-6}
& \multirow{3}{*}{0}  & LLM-AC, w/ECCT+ICD  & \textbf{0.8076} & \textbf{0.7741} & \textbf{0.8976} \\
&                           & LLM-AC, w/ECCT+Context & 0.6647 & 0.6135 & 0.8398 \\
&                           & LLM-AC, w/ECCT & 0.5120 & 0.4584 & 0.7181 \\
\cmidrule(lr){2-6}
& \multirow{3}{*}{3}  & LLM-AC, w/ECCT+ICD  & 1.0000 & 1.0000 & 1.0000 \\
&                           & LLM-AC, w/ECCT+Context & 1.0000 & 1.0000 & 1.0000 \\
&                           & LLM-AC, w/ECCT & 1.0000 & 1.0000 & 1.0000 \\
\midrule

% ===================== (49,30) =====================
\multirow{9}{*}{(49,30)}
& \multirow{3}{*}{-3}  & LLM-AC, w/ECCT+ICD  & \textbf{0.1778} & \textbf{0.0708} & \textbf{0.5462} \\
&                           & LLM-AC, w/ECCT+Context & 0.1398 & 0.0530 & 0.5275 \\
&                           & LLM-AC, w/ECCT & 0.0937 & 0.0093 & 0.4623 \\
\cmidrule(lr){2-6}
& \multirow{3}{*}{0}  & LLM-AC, w/ECCT+ICD  & \textbf{0.5457} & \textbf{0.4833} & \textbf{0.7532} \\
&                           & LLM-AC, w/ECCT+Context & 0.5125 & 0.4429 & 0.7376 \\
&                           & LLM-AC, w/ECCT & 0.3152 & 0.2322 & 0.6313 \\
\cmidrule(lr){2-6}
& \multirow{3}{*}{3}  & LLM-AC, w/ECCT+ICD  & \textbf{0.9697} & \textbf{0.9602} & \textbf{0.9852} \\
&                           & LLM-AC, w/ECCT+Context & \textbf{0.9697} & \textbf{0.9602} & \textbf{0.9852} \\
&                           & LLM-AC, w/ECCT & 0.9444 & 0.9266 & 0.9734 \\
\midrule

% ===================== (49,36) =====================
\multirow{9}{*}{(49,36)}
& \multirow{3}{*}{-3}  & LLM-AC, w/ECCT+ICD  & 0.1761 & 0.0480 & \textbf{0.5746} \\
&                           & LLM-AC, w/ECCT+Context & \textbf{0.1828} & \textbf{0.0485} & 0.5738 \\
&                           & LLM-AC, w/ECCT & 0.0970 & 0.0090 & 0.4668 \\
\cmidrule(lr){2-6}
& \multirow{3}{*}{0}  & LLM-AC, w/ECCT+ICD  & \textbf{0.4433} & \textbf{0.3602} & \textbf{0.6973} \\
&                           & LLM-AC, w/ECCT+Context & 0.2922 & 0.1719 & 0.5936 \\
&                           & LLM-AC, w/ECCT & 0.1416 & 0.0478 & 0.5133 \\
\cmidrule(lr){2-6}
& \multirow{3}{*}{3}  & LLM-AC, w/ECCT+ICD  & \textbf{0.9762} & \textbf{0.9674} & \textbf{0.9901} \\
&                           & LLM-AC, w/ECCT+Context & 0.8779 & 0.8455 & 0.9402 \\
&                           & LLM-AC, w/ECCT & 0.6786 & 0.6093 & 0.8073 \\
\bottomrule
\end{tabular*}
\end{table*}

\subsection{Ablation Experiments}

Table~\ref{tab:ablation_coderate_awgn} presents a comprehensive ablation study under AWGN channels across different code rates and SNRs. Overall, LLM-AC with ECCT+ICD consistently delivers the strongest performance, especially in the low-SNR regimes, where reliable recovery is most challenging. Compared with the ECCT-only baseline, incorporating ICD yields substantial gains across all three metrics, confirming the necessity of structured enhancement beyond ECCT-based decoding. As the code rate increases, performance tends to decline at the same SNR, since higher-rate coding provides less redundancy for error correction and thus leaves more residual errors after decoding, under which the baselines degrade more noticeably while ICD more consistently preserves its advantage.

When the CCS module is removed and only context-assisted decoding is retained (denoted as w/ECCT+Context), performance degrades but remains clearly superior to the ECCT-only baseline. This indicates that context modeling alone provides meaningful improvements, yet is insufficient to fully exploit the candidate space without sampling. The full ICD configuration consistently achieves the best or near-best results, highlighting the complementary roles of context modeling and candidate sampling. We also present a comparison of the runtime test on NVIDIA GeForce RTX 4090 and find that the CCS module yields a $1.6457\times$ speedup. Overall, it validates the effectiveness of the CCS module for a more computationally efficient solution.
\section{Conclusions}\label{sec5_Conclusions}

In this work, we have presented ICD, a receiver-side in-context decoding framework for improving the robustness of SSCC-based transmission in low-SNR regimes. By leveraging ECCT-derived bit-wise reliability and contextual information, ICD has leveraged a three-stage processing chain composed of CCG, CCS, and CLR. Specifically, CCG has produced a confidence-ranked candidate pool via reliability-guided bit flipping, while CCS has performed diversity-preserving sampling to achieve a favorable accuracy–complexity trade-off, and CLR has fused ECCT-derived reliability with source-level log-likelihood to determine the final reconstruction. We have further provided theoretical guarantees on the stability and convergence of this CCS module. Extensive evaluations over AWGN and Rayleigh fading channels have demonstrated consistent improvements compared with conventional SSCC pipelines and representative JSCC schemes, and achieved a favorable balance between decoding reliability and computational efficiency. In future work, we will extend ICD by incorporating stronger model-free decoding beyond ECCT and develop SSCC designs that better match image modalities for more reliable visual content transmission.

\bibliographystyle{IEEEtran}
\bibliography{ref}
\end{document}